\documentclass{article}
\usepackage{iclr2027_conference}
\usepackage{times}
\usepackage{amsmath,amsfonts,bm}

\def\eqref#1{equation~\ref{#1}}
\def\1{\bm{1}}

\DeclareMathAlphabet{\mathsfit}{\encodingdefault}{\sfdefault}{m}{sl}
\SetMathAlphabet{\mathsfit}{bold}{\encodingdefault}{\sfdefault}{bx}{n}

\usepackage{url}
\usepackage{amsmath,amssymb,amsthm}
\usepackage{bm,mathtools}
\usepackage{multirow,array}
\usepackage{booktabs}
\usepackage{xcolor}
\usepackage{colortbl}
\usepackage{arydshln}
\usepackage{graphicx}
\usepackage{wrapfig}
\usepackage{needspace}
\usepackage{algorithm}
\usepackage{algorithmic}
\usepackage{hyperref}
\usepackage{cleveref}

\newtheorem{theorem}{Theorem}
\newtheorem{proposition}[theorem]{Proposition}
\newtheorem{lemma}[theorem]{Lemma}
\newtheorem{remark}[theorem]{Remark}
\newtheorem{corollary}[theorem]{Corollary}

\newcommand{\method}{FidelityQ}

\title{\method: Preserving Predictive Fidelity in\\Ultra-Low-Bit Language Model Quantization}

\author{Yujie Chen$^{1}$, Weiyan Zheng$^{2}$, Yiwu Yao$^{2}$, Tianchi Hu$^{2}$, Bo Wang$^{2}$, Jiyu Guo$^{3}$, \\
{\bf Wenzhan Li$^{4}$, Jingcai Guo$^{1}$, Michele Magno$^{5}$, Haotong Qin$^{1}$\thanks{Corresponding author.}} \\
\textsuperscript{1}The Hong Kong Polytechnic University \quad \textsuperscript{2}Huawei Technologies Ltd. \\
\textsuperscript{3}The University of Hong Kong \quad \textsuperscript{4}Sun Yat-sen University \quad \textsuperscript{5}ETH Zurich \\
{\small\texttt{haotong.qin@polyu.edu.hk}}
}

\iclrfinalcopy
\begin{document}

\setcounter{footnote}{1}  % so the single \thanks prints as a dagger, not an asterisk
\maketitle
\lhead{}

% ====================================================================
\begin{abstract}
Ultra-low-bit scalar quantization offers a regular arithmetic structure for efficient deployment of large language models, but preserving their predictive behavior at extreme precision remains challenging.
With only a few reconstruction levels, accurate weight representation must translate into consistent model predictions using a small set of calibration samples.
We propose \textbf{\method}, a scalar quantization framework for preserving predictive fidelity at weight precisions down to 2 bits.
Specifically, we introduce \emph{Information-Aligned Analytic Coding} (\textbf{IAC}) for accurate quantization.
IAC shapes each weight row toward a centered symmetric source through randomized Hadamard mixing, then fits a mirrored four-level minimum-distortion quantizer to its empirical weights.
Every level set in this symmetry-aligned family decomposes exactly into 2 sign-valued bases with associated scales, combining source-adaptive reconstruction with lookup-free decoding compatible with scaled integer arithmetic.
Our analysis further reveals that residual predictive discrepancies are heavy-tailed and concentrate on tokens that the teacher predicts confidently, motivating \emph{Knowledge-Intensive Calibration} (\textbf{KIC}) for consistent quantization.
KIC ranks sequences using the tail of token-wise teacher-student divergence and allocates a limited calibration budget to the highest-scoring samples.
It then restores predictive agreement by updating only per-row reconstruction levels under teacher supervision, leaving the pretrained weights frozen.
Experiments on the LLaMA and Qwen families demonstrate the effectiveness of \method{} across weight-only and joint quantization settings.
On LLaMA-2-7B, \method{} achieves a WikiText2 perplexity of 6.62 at W2A16 and improves the prior state-of-the-art W2A4KV4 result from 8.31 to 6.96, while reducing linear-layer weight memory by 8.0$\times$ relative to FP16.  % [T:tab:main, tab:main-full, tab:e2e]
These results support accurate LLM deployment with structured ultra-low-bit quantization.
Our code is publicly available at \url{https://github.com/AI-Efficiency/FidelityQ}.
\end{abstract}

% ====================================================================
\section{Introduction}
\label{sec:intro}

Large language models (LLMs) support diverse applications, from code generation to embodied planning~\citep{codellama,palme,zhang2026using}, but their deployment remains constrained by the memory required to store and access their parameters~\citep{mobilellm,llmflash,ragstack}. Quantization reduces this burden by representing model parameters and intermediate tensors at lower numerical precision~\citep{gptq,awq,omniquant}. Reducing weight precision to 2 bits offers substantial storage savings, while scalar representations with simple scaling operations preserve a regular arithmetic structure for deployment under tight memory budgets. However, each scalar weight then has only 4 possible reconstruction values. Preserving a pretrained LLM's predictive fidelity under this constraint requires accurate reconstruction of its weights.

Existing quantization methods improve reconstruction through different representations and optimization strategies. Scalar methods compensate quantization errors or adapt channel scales while retaining simple decoding operations~\citep{gptq,awq,omniquant,dbllm}. Nevertheless, a fixed or restricted set of reconstruction values can remain poorly matched to the weight distribution at 2 bits. Codebook methods offer richer representations and competitive accuracy~\citep{quip-sharp,aqlm}, but require specialized decoding to translate their representation advantages into efficient execution. Calibration and distillation can further recover predictive quality by optimizing quantization parameters or aligning model outputs~\citep{omniquant,dbllm,efficientqat}. For example, on LLaMA-2-7B, group-wise round-to-nearest quantization increases WikiText2 perplexity from 5.47 to 431.97, whereas the published DB-LLM and QuIP\# results reduce it to 7.23 and 6.66, respectively~\citep{dbllm,efficientqat}.  % [T:tab:main, tab:main-full]

\begin{figure}[!t]
\centering
\includegraphics[width=\textwidth,trim=0 1.4bp 0 0,clip]{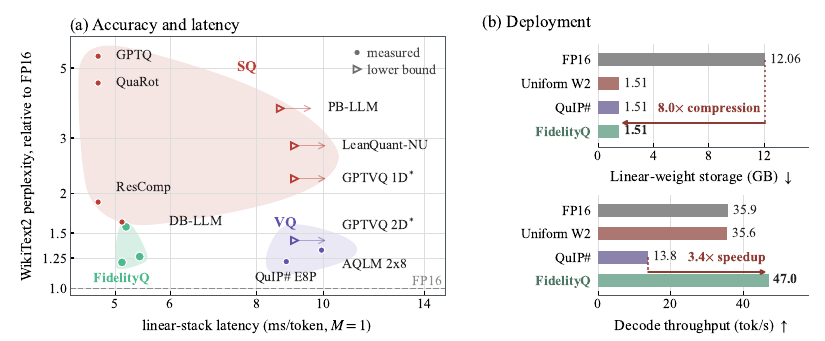}
\caption{\method{} against scalar (SQ) and codebook (VQ) quantization at two bits. (a) \method{} improves the accuracy-efficiency tradeoff: no other scalar baseline shown reaches the codebook methods' WikiText2 perplexity, and \method{} reaches it at scalar latency. $^{*}$ is 2.125 bits per weight. (b) The same representation is smaller and faster: on LLaMA-2-7B at batch size 1, its linear-layer weights take 8.0$\times$ less memory than FP16 and decode 3.4$\times$ faster than QuIP\# (Appendix~\ref{app:kernel}).}
\label{fig:tradeoff}
\end{figure}
\vspace{-4pt}

We propose \textbf{\method}, a scalar quantization framework for preserving predictive fidelity at weight precisions down to 2 bits (\cref{fig:method}). Building on classical fixed-rate scalar quantization~\citep{lloyd1982,max1960}, we view quantization as source coding under a fixed bit budget and analyze how reconstruction accuracy relates to the structure of the code. Our analysis shows that the four-level minimum-distortion solution for the centered Gaussian reference lies in a mirrored family whose reconstruction values admit an exact expression in 2 sign-valued bases and their scales. This connects source-adaptive level placement with lookup-free decoding compatible with integer arithmetic. We further consider how the remaining approximation affects model predictions. Weight reconstruction error alone does not identify the inputs on which the quantized model departs from its full-precision counterpart, motivating calibration data chosen directly from their predictive disagreement.

Specifically, we introduce \emph{Information-Aligned Analytic Coding} (\textbf{IAC}) for accurate quantization. IAC first applies randomized Hadamard mixing, following rotation-based quantization~\citep{quarot,quip-sharp}, to obtain centered, approximately symmetric, near-Gaussian weight marginals. Guided by this source geometry, it restricts the 4 reconstruction levels to two mirrored pairs and determines their magnitudes by minimizing empirical squared error once for each transformed row~\citep{lloyd1982,max1960}. Every such mirrored level set is represented exactly by 2 sign-valued bases and their amplitudes, allowing decoding through 2 scaled dot-product paths without per-element lookup. With quantized activations, the dot products use integer arithmetic while the amplitudes supply the reconstruction scales. Building on this representation, we introduce \emph{Knowledge-Intensive Calibration} (\textbf{KIC}) for consistent quantization. KIC measures token-wise divergence between the full-precision teacher and the quantized model, aggregates the largest divergences within each sequence, and selects the highest-scoring sequences under a fixed calibration budget. The selected sequences concentrate calibration on predictions that require closer agreement with the teacher. Teacher-guided calibration then updates only the per-row reconstruction levels, with the pretrained weights frozen. This procedure connects accurate weight reconstruction with targeted calibration, preserving predictive fidelity within a regular, lookup-free arithmetic structure.

We evaluate \method{} on the LLaMA and Qwen families under weight-only and joint quantization settings. On LLaMA-2-7B at W2A16, \method{} achieves a WikiText2 perplexity of 6.62, compared with 8.89 for DB-LLM~\citep{dbllm} at matched per-row granularity and 6.66 for QuIP\# under the 2048-token reevaluation of \citet{efficientqat}. At W2A4KV4 it achieves 6.96, compared with the reported 8.31 of RCP~\citep{rcp}, while retaining lookup-free decoding. On an RTX A6000 at batch size 1, its linear stack takes 5.1\,ms per token against 9.9\,ms for AQLM, and the full model reaches 47.0 tokens per second in eager mode against 35.9 for FP16 (\cref{fig:tradeoff}). These results demonstrate strong predictive performance with 2-bit scalar quantization.  % [T:tab:main, tab:main-full, fig:tradeoff, tab:e2e]

% ====================================================================
\section{Related work}
\label{sec:related}

\textbf{Post-training quantization.}
GPTQ and its extensions reduce reconstruction error through sequential compensation~\citep{gptq,gptaq,rescomp}.
AWQ and OmniQuant adapt channel scales or equivalent transformations to improve scalar quantization~\citep{awq,omniquant}.
Rotation-based methods further reshape the source: QuaRot uses fixed Hadamard rotations, while SpinQuant and FlatQuant optimize rotations or more general transformations~\citep{quarot,spinquant,flatquant}.
\method{} builds on source shaping and error compensation, keeps pretrained weights frozen during calibration, and adapts scalar reconstruction levels to the transformed weight distribution.

\textbf{Low-bit representations and decoding.}
Vector and codebook quantizers provide strong accuracy references at two bits.
QuIP\# uses a lattice codebook, while AQLM and VPTQ learn codebooks for weight reconstruction~\citep{quip-sharp,aqlm,vptq}.
Their indexed representations require decoding alongside matrix multiplication.
Scalar alternatives retain arithmetic structure: PB-LLM leaves salient weights at higher precision and BiLLM binarizes them residually, while DB-LLM reconstructs two-bit weights through two binary branches~\citep{pbllm,billm,dbllm}.
Classical scalar quantization determines reconstruction values by minimizing distortion~\citep{lloyd1982,max1960,gray-neuhoff}, which Qift applies to rotated two-bit weights as one mirrored no-zero level set shared by all rows~\citep{qift}.
IAC instead fits the levels per transformed row and expresses them exactly through two sign planes, connecting source adaptation to regular arithmetic decoding.

\textbf{Calibration and data selection.}
Quantization parameters can be refined through end-to-end optimization, as in EfficientQAT's step-size updates, or teacher supervision, as in BitDistiller and DB-LLM~\citep{efficientqat,bitdistiller,dbllm}.
Calibration data also affect compression quality~\citep{calibration-impact}; self-calibration obtains such data from model-generated text~\citep{selfcalibration}.
Broader selection methods target pretraining distributions or downstream capabilities~\citep{dsir,less}.
More closely related to compression recovery, PASER allocates instruction data according to capability degradation after pruning~\citep{paser}.
KIC selects calibration sequences using the tail of token-wise disagreement between the full-precision and quantized models, then refines only per-row reconstruction levels under teacher supervision.
Its selection criterion directly measures the predictive changes caused by quantization.

% ====================================================================
\section{Methodologies}
\label{sec:method}

We introduce \textbf{\method}, a framework that preserves predictive fidelity through a source-adaptive scalar representation and its calibration against the full-precision model.
The central design is to make reconstruction values adaptable while keeping their arithmetic decoder fixed throughout quantization and calibration.
As shown in \cref{fig:method}, \emph{Information-Aligned Analytic Coding} (IAC) constructs this representation, and \emph{Knowledge-Intensive Calibration} (KIC) refines it on sequences selected by predictive disagreement.
Both stages operate on the same reconstruction levels, connecting accurate weight reconstruction to consistent predictions within a structured low-bit representation.

\begin{figure}[t]
\centering
\includegraphics[width=\textwidth,trim=0 5bp 0 0,clip]{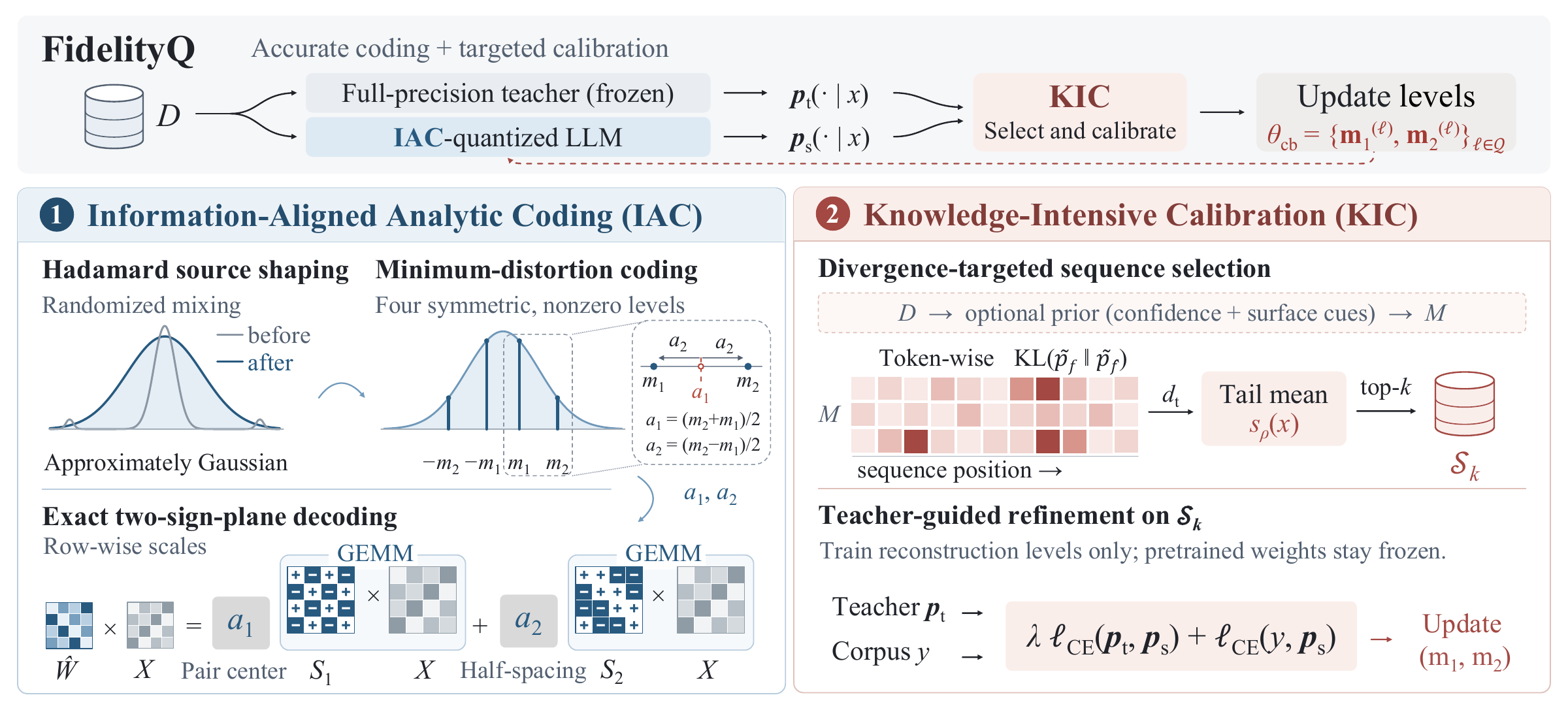}
\caption{Overview of \method{}. The top pipeline compares a frozen full-precision teacher with the IAC-quantized model, after which KIC selects calibration data and updates only the reconstruction levels. IAC shapes the source with a randomized Hadamard transform, places four symmetric nonzero levels at minimum distortion, and decodes them exactly as two sign planes with row-wise scales. KIC ranks sequences by the tail mean of token-wise divergence, keeps the top-$k$ complete sequences, and refines the levels using the teacher distribution and corpus targets.}
\label{fig:method}
\end{figure}

\subsection{From Scalar Reconstruction to Predictive Fidelity}
\label{sec:prelim}
\label{sec:rate-distortion}

At two bits, a scalar quantizer has only four reconstruction values to represent the weights.
A fixed level placement can mismatch the source distribution and incur avoidable reconstruction error.
For an empirical row distribution $p_r$ and a scalar weight $z_r\sim p_r$, the minimum-distortion objective is~\citep{lloyd1982,max1960}
\begin{equation}
\mathcal{C}_r^\star
\;=\;
\arg\min_{|\mathcal{C}_r|=4}
\mathbb{E}_{z_r\sim p_r}\!\left[
\min_{c\in\mathcal{C}_r}(z_r-c)^2
\right].
\label{eq:rate-distortion}
\end{equation}
This objective depends on how the four reconstruction values cover the source distribution.
When rare outliers determine the range of a uniform grid, the densely populated region can receive too little reconstruction precision.
The resulting weight errors enter layer computations and can propagate to model predictions.
Improving fidelity, therefore, calls for a source that is easier to quantize and reconstruction levels that match its probability mass.

\subsection{Information-Aligned Analytic Coding for Accurate Quantization}
\label{sec:iac}
\label{sec:iac-place}
\label{sec:iac-decode}

To improve scalar reconstruction while preserving arithmetic decoding, we introduce \emph{Information-Aligned Analytic Coding} (IAC).
As shown in \cref{fig:method}, IAC first shapes the weight distribution, then fits reconstruction levels and assigns quantization codes, and finally maps these codes to two sign planes for exact arithmetic decoding.

\textbf{Stage 1: Source shaping.}

To spread outlier energy across coordinates, we apply a randomized signed Hadamard transform $\boldsymbol{q}$ to weights $\boldsymbol{w}\in\mathbb{R}^{d_o\times d_i}$ and activations $\boldsymbol{x}\in\mathbb{R}^{n\times d_i}$.
With $\boldsymbol{w}'=\boldsymbol{w}\boldsymbol{q}$ and $\boldsymbol{x}'=\boldsymbol{x}\boldsymbol{q}$, orthogonality gives $\boldsymbol{x}'{\boldsymbol{w}'}^\top=\boldsymbol{x}\boldsymbol{w}^\top$.
The transform preserves the unquantized linear operation and each weight row's energy while changing the values presented to the scalar quantizer~\citep{quarot}.
For the transformed scalar source, we use a centered Gaussian reference.
This reference attains the maximum-entropy bound among centered continuous sources $z_r$ with variance $\sigma_r^2$:
\begin{equation}
h(z_r)
\;\leq\;
\frac{1}{2}\log\!\left(2\pi e\sigma_r^2\right),
\label{eq:max-entropy}
\end{equation}
with equality for $z_r\sim\mathcal{N}(0,\sigma_r^2)$.
This maximum-entropy reference characterizes the regularized marginals, providing a basis for subsequent level fitting to preserve as much of their information as possible through minimum-distortion reconstruction.

\Needspace{4\baselineskip}
\textbf{Stage 2: Adaptive level placement.}

The shaped source remains concentrated near zero, motivating nonuniform level placement to reduce reconstruction error~\citep{rcp}.
For the Gaussian reference, the unique four-level MMSE solution consists of two mirrored pairs (\cref{prop:lossless}).
IAC uses this structure and fits two positive magnitudes $0<m_{1,r}<m_{2,r}$ to each empirical row:
\begin{align}
\{m_{1,r}^\star,m_{2,r}^\star\}
&\;=\; \arg\min_{0<m_1<m_2}
\frac{1}{d_i}\sum_{j=1}^{d_i}
\min_{c\,\in\,\{\pm m_1,\pm m_2\}}
(w'_{rj}-c)^2 .
\label{eq:mmse}
\end{align}
We optimize this objective by alternating nearest-level assignment with magnitude updates.
For fixed assignments, each magnitude is the mean absolute value of the weights assigned to it~\citep{lloyd1982,max1960}.
Stage 1 regularizes the marginals, and Stage 2 fits four reconstruction values to minimize their distortion, preserving information within the two-bit budget.

\Needspace{4\baselineskip}
\textbf{Stage 3: Exact two-sign-plane decoding.}

To preserve the fitted reconstruction values in efficient arithmetic computation, we express the nonuniform levels as sums of two signed amplitudes.
For each row, setting $a_{1,r}=(m_{2,r}+m_{1,r})/2$ and $a_{2,r}=(m_{2,r}-m_{1,r})/2$ gives
\begin{equation}
\hat{\boldsymbol{w}}'
=
\operatorname{diag}(\boldsymbol{a}_1)\boldsymbol{s}_1+\operatorname{diag}(\boldsymbol{a}_2)\boldsymbol{s}_2,
\label{eq:decomp}
\end{equation}
where $\boldsymbol{a}_1,\boldsymbol{a}_2\in\mathbb{R}^{d_o}$ are row-wise amplitudes and $\boldsymbol{s}_1,\boldsymbol{s}_2\in\{-1,+1\}^{d_o\times d_i}$ are sign planes.
The four sign combinations reproduce $\{\pm m_{1,r},\pm m_{2,r}\}$ exactly, so this conversion adds no reconstruction error in exact arithmetic.
Since the amplitudes are shared within each row, they factor outside the dot products:
\begin{equation}
\boldsymbol{x}'(\hat{\boldsymbol{w}}')^\top
=
(\boldsymbol{x}'\boldsymbol{s}_1^\top)\operatorname{diag}(\boldsymbol{a}_1)
+(\boldsymbol{x}'\boldsymbol{s}_2^\top)\operatorname{diag}(\boldsymbol{a}_2).
\label{eq:gemm-decode}
\end{equation}
\Needspace{4.3in}
\begin{wrapfigure}{r}{0.68\textwidth}
\vspace{-8pt}
\centering
\begin{minipage}[c]{2.3in}
\includegraphics[width=\linewidth,trim=0 3bp 0 0,clip]{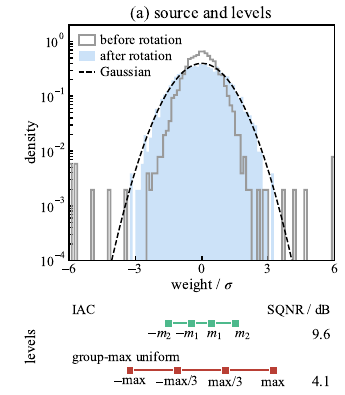}
\end{minipage}\hfill
\begin{minipage}[c]{1.38in}
\includegraphics[width=\linewidth,trim=0 2bp 0 0,clip]{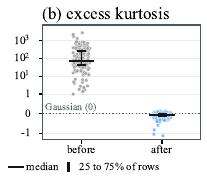}\\[3pt]
\includegraphics[width=\linewidth,trim=0 1.5bp 0 0,clip]{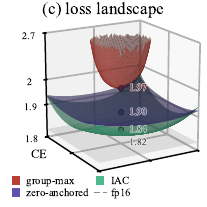}
\end{minipage}
\caption{IAC source shaping and adaptive level placement on LLaMA-2-7B at W2. (a) Rotation turns a heavy-tailed row into an approximately symmetric marginal, and the fitted levels improve SQNR over the group-max grid. (b) The reduction in excess kurtosis holds across input-rotated projections. (c) With one block quantized, the fitted levels lie in a low-cross-entropy region of the model-level loss surface. The upper loss range is compressed for visibility.}
\label{fig:iac}
\vspace{-6pt}
\end{wrapfigure}

The linear operation thus uses two sign-plane reductions and output scaling, avoiding per-element codebook lookup.
Both paths can be fused without materializing a decoded weight matrix.
With quantized activations, the dot products can use integer arithmetic.
Each weight is represented by two sign bits, with two fp16 amplitudes shared across each row.
The complete quantization procedure is given in \cref{app:algorithm}.

By spreading outlier energy across coordinates, Stage 1 reduces the influence of rare extremes on the scalar source and brings its marginals closer to the maximum-entropy Gaussian reference (\cref{fig:iac}a,b).
The shaped density remains nonuniform, so a uniform grid still allocates reconstruction precision poorly across its probability mass.
Stage 2 uses this regularized source to fit four levels where they most reduce expected squared error, converting source shaping into a better use of the two-bit reconstruction budget.
On the illustrated rotated row, the fitted levels raise SQNR from 4.1\,dB with the group-max grid to 9.6\,dB (\cref{fig:iac}a).
The covariance between sequential sensitivity and squared error links scalar distortion to layer loss (\cref{app:theory}); the more even sensitivities observed after rotation help align these objectives.
The fitted levels also lie in a low-cross-entropy region of the model-level landscape (\cref{fig:iac}c).
Source shaping thus supplies a more favorable basis for scalar reconstruction, and minimum-distortion level placement exploits it to retain as much information as possible within the two-bit budget, while Stage 3 maintains efficient arithmetic execution.

IAC therefore combines source-adaptive reconstruction with efficient two-bit scalar execution.
It reduces fixed-grid mismatch and captures source-adaptation benefits of vector quantization, while retaining a regular arithmetic decoder without codebook lookup or decoded-weight materialization.

\subsection{Knowledge-Intensive Calibration for Consistent Quantization}
\label{sec:kic}

Although IAC minimizes weight distortion within the scalar representation, residual errors can propagate through nonlinear layers and alter predictions on a small subset of inputs.
Knowledge-Intensive Calibration (KIC) allocates a limited calibration budget to these inputs.
It ranks complete sequences by residual teacher-student disagreement, then refines only the quantization parameters on the selected sequences.
Let $f$ denote the full-precision teacher and $\hat f$ the IAC-quantized model.
At position $t$, their predictive distributions are restricted and renormalized on the teacher's top-$k_v$ support to obtain $\widetilde p_f$ and $\widetilde p_{\hat f}$.
KIC measures the damage at each position by the forward divergence
\begin{equation}
d_t(x)
\;=\;
\mathrm{KL}\!\left(
\widetilde p_f(\cdot\mid x_{<t})
\,\big\|\,
\widetilde p_{\hat f}(\cdot\mid x_{<t})
\right).
\label{eq:damage}
\end{equation}
For a sequence $x$, let $\mathcal{T}_\rho(x)$ index token divergences at or above their $(1-\rho)$ quantile.
The sequence score is their mean
\begin{equation}
s_\rho(x)
\;=\;
\frac{1}{|\mathcal{T}_\rho(x)|}
\sum_{t\in\mathcal{T}_\rho(x)} d_t(x),
\qquad \rho=0.25.
\label{eq:damage-score}
\end{equation}
KIC ranks the candidate pool by $s_\rho(x)$ and retains the top $k$ complete sequences as $\mathcal{S}_k$.

On the selected sequences, KIC adjusts the reconstruction levels to reduce prediction error across the model.
It minimizes the objective
\begin{equation}
\ell_{\mathrm{KIC}}
\;=\;
\mathbb{E}_{(x,y)\in\mathcal{S}_k}
\!\left[
\ell_{\mathrm{CE}}(y,\boldsymbol{p}_s)
+ \lambda\,\ell_{\mathrm{CE}}(\boldsymbol{p}_t,\boldsymbol{p}_s)
\right],
\qquad \lambda=0.1.
\label{eq:repair}
\end{equation}
Here $\boldsymbol{p}_t$ renormalizes the teacher distribution on its cached top-512 tokens, with zero mass elsewhere; $\boldsymbol{p}_s$ is the student distribution.
The hard-target term retains corpus supervision, while minimizing the soft-target cross-entropy is equivalent to minimizing teacher-student KL divergence~\citep{kd}.
Only the two reconstruction magnitudes $(m_{1,r},m_{2,r})$ per row are trainable; weight signs and bin assignments remain fixed.
The decoder amplitudes $(a_{1,r},a_{2,r})$ follow from the mapping in Stage 3.
These updates stay within the two-sign-plane family established by IAC, preserving the stored two-bit representation and its arithmetic decoder.

\begin{figure}[t]
\centering
\includegraphics[width=\textwidth,trim=0 3bp 0 0,clip]{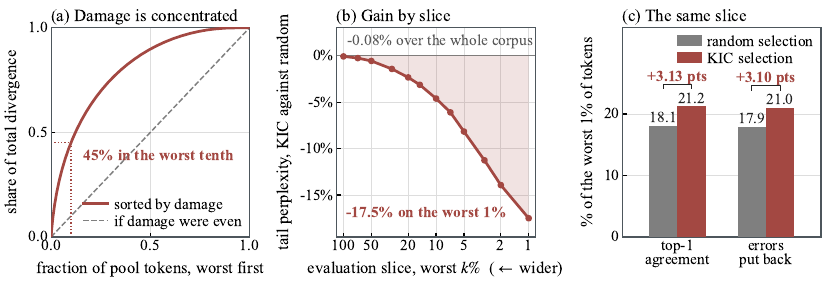}
\caption{The damage KIC targets is concentrated, and KIC improves the tokens that carry it. On LLaMA-2-7B at W2A4KV4, (a) the worst decile of tokens carries 45\% of the total divergence. On Qwen3-8B-Base at W2A16, against a random draw of the same size on a slice ranked before either arm is trained, (b) the tail perplexity gain grows from 0.08\% over the corpus to 17.5\% on the worst 1\%, and (c) on that slice, top-1 agreement and repaired errors rise by 3.13 and 3.10 points, where tail perplexity is 79.8 against 96.7.}
\label{fig:kic}
\end{figure}
Teacher-student divergence measures the prediction change caused by quantization, which local reconstruction MSE cannot locate after errors propagate through the model.
Teacher entropy instead measures uncertainty in the original prediction and can be high even when quantization leaves that prediction unchanged.
\Cref{fig:kic} shows what selecting on this score buys: on the worst 1\% of held-out tokens by pre-calibration divergence, tail perplexity falls from 96.7 to 79.8 against a random draw of the same size, while the two arms stay within 0.08\% of each other on the corpus as a whole.  % [T:figs/F4/kic_tail.csv]
Because a sequence may contain only a few strongly altered positions, the tail mean prevents their damage from being diluted by positions near agreement and reduces dependence on a single extreme value.
This upper-tail average follows the conditional value-at-risk criterion~\citep{cvar}.
Selecting the top $k$ sequences, therefore, maximizes the sum of sequence scores under the calibration budget.

KIC therefore concentrates a small calibration budget on the residual errors that most alter model predictions.
Scoring requires one offline forward pass of each model over the candidate pool and adds no inference-time cost.
On one shared LLaMA-2-7B pool, rankings select sets with 0.80 to 0.90 overlap across A16KV16, A4KV8, and A4KV4, compared with 0.125 for random selection.  % [T:figs/data/kic_overlap.npz]
This overlap supports reusing a ranking across precision configurations of the same base model when rescoring is unnecessary.
The small trainable parameter set also limits memory for gradients and optimizer states.

% ====================================================================
\section{Experiments}
\label{sec:experiments}

\subsection{Setup}
\label{sec:setup}

\textbf{Quantization.}
We evaluate weight-only quantization at W2A16KV16 and joint weight, activation, and KV-cache quantization at W2A4KV4, where W-A-KV denotes the three bit widths.
\method{} quantizes weights to two bits with per-row reconstruction levels, while embeddings and the output head remain in FP16.
The downstream checkpoints use 4096 KIC-selected sequences, except LLaMA-2-13B, which uses 1024.
Each \method{} row comes from one calibration run on a mixed-domain pool.
Calibration settings and implementation details are provided in Appendix~\ref{app:protocol}.

\textbf{Models and baselines.}
We evaluate LLaMA-2-7B/13B~\citep{llama2}, LLaMA-3-8B~\citep{llama3}, Qwen3-8B-Base~\citep{qwen3}, and LLaMA-30B~\citep{llama}.
LLaMA-3-8B results are provided in Appendix~\ref{app:llama3-results}.
Our primary controlled baseline, Base, uses randomized rotation, error compensation, and a uniform grid without finetuning.
At matched W-A-KV configurations, we also compare with QuaRot~\citep{quarot}, ResComp~\citep{rescomp}, and DB-LLM~\citep{dbllm} as scalar baselines, and AQLM~\citep{aqlm} as a codebook accuracy reference.
Additional comparisons are in Appendix~\ref{app:results}.

\textbf{Evaluation.}
We measure perplexity on WikiText2~\citep{wikitext2} and the segment-sampled C4~\citep{c4} validation split with a context length of 2048.
Downstream evaluation includes zero-shot accuracy on PiQA~\citep{piqa}, ARC-e and ARC-c~\citep{arc}, HellaSwag~\citep{hellaswag}, WinoGrande~\citep{winogrande}, BoolQ~\citep{boolq}, and OpenBookQA~\citep{obqa}, together with LAMBADA~\citep{lambada} and 5-shot MMLU~\citep{mmlu}.
We report the seven-task average separately from LAMBADA and MMLU and use the LM Evaluation Harness~\citep{evalharness} for task evaluation.

\subsection{Quantization accuracy}
\label{sec:main-results}

\begin{table}[!t]
\caption{\method{} achieves the lowest WikiText2 perplexity among the compared scalar methods across model families, under weight-only and joint weight, activation, and KV-cache quantization. The 7-task average covers ARC-Easy, ARC-Challenge, HellaSwag, PIQA, WinoGrande, BoolQ, and OpenBookQA, and the 5-task average the first five of these; complete task results are in Appendix~\ref{app:results}.}
\label{tab:main}
\begin{center}
\scriptsize
\renewcommand{\arraystretch}{0.96}
\setlength{\tabcolsep}{2.2pt}
\resizebox{\textwidth}{!}{%
\begin{tabular}{ll|cc|cccc|cc|ccc}
\toprule
\multirow{2}{*}{\textbf{Method}} & \multirow{2}{*}{\textbf{W-A-KV}}
 & \multicolumn{2}{c|}{\textbf{Perplexity}\,$\downarrow$}
 & \multirow{2}{*}{\textbf{HellaS}} & \multirow{2}{*}{\textbf{PIQA}} & \multirow{2}{*}{\textbf{WinoG}} & \multirow{2}{*}{\textbf{OBQA}}
 & \multicolumn{2}{c|}{\textbf{Avg.}\,$\uparrow$}
 & \multicolumn{2}{c}{\textbf{LAMBADA}} & \multirow{2}{*}{\textbf{MMLU}$_{5}$} \\
\cmidrule(lr){3-4}\cmidrule(lr){9-10}\cmidrule(lr){11-12}
 &  & Wiki2 & C4 &  &  &  &  & 7-task & 5-task & Std & OpenAI & \\
\midrule
\multicolumn{13}{l}{\textbf{LLaMA-2-7B}} \\
FP16 & 16-16-16 & 5.47 & 6.97 & 76.16 & 78.78 & 69.46 & 44.0 & 66.67 & 68.66 & 68.81 & 73.65 & 45.94 \\
AQLM$^{\ast}$ & 2-16-16 & 7.24 & \textbf{8.96} & 66.74 & 74.10 & 62.51 & 36.6 & 58.04 & 60.08 & 56.26 & 67.34 & 33.03 \\
\hdashline[1.5pt/1.5pt]
%RTN & 2$^{\ddagger}$-16-16 & 431.97 &  &  &  &  &  &  &  &  &  & \\
QuaRot & 2-16-16 & 24.52 & 39.39 & 33.30 & 58.98 & 52.17 & 28.2 & 42.45 & 41.50 & 15.72 & 17.04 & 25.87 \\
DB-LLM & 2-16-16 & 8.89 & 19.85 & 51.99 & 62.13 & 55.96 & 32.6 & 47.80 & 48.16 & 32.19 & 45.47 & 24.95 \\
%EfficientQAT\,(QAT) & 2$^{\dagger}$-16-16 & 7.16 & \textbf{8.90} & \textbf{67.43} & \textbf{75.03} & \textbf{65.51} & \textbf{39.0} & \textbf{60.26} & \textbf{62.16} & \textbf{59.19} & 67.32 & \textbf{33.55} \\
%EfficientQAT\,(QAT) & 2$^{\ddagger}$-16-16 & 6.86 & 8.51 & 68.88 & 74.86 & 64.64 & 39.0 & 60.80 & 62.10 & 57.91 & 66.68 & 33.04 \\
ResComp & 2-16-16 & 10.28 & 19.68 & 44.15 & 64.85 & 58.25 & 31.0 & 47.70 & 47.98 & 24.92 & 33.07 & 27.17 \\
Base (uniform grid) & 2-16-16 & 9.56 & 18.43 & 44.50 & 66.16 & 57.93 & 29.2 & 48.00 & 48.91 & 24.45 & 32.37 & 26.70 \\
\textbf{\method{}} & 2-16-16 & \textbf{6.62} & 9.53 & \textbf{68.33} & \textbf{74.97} & \textbf{66.14} & \textbf{40.4} & \textbf{59.63} & \textbf{61.78} & \textbf{61.89} & \textbf{68.87} & \textbf{37.29} \\
\hdashline[1.5pt/1.5pt]
% 2-4-8 held back: no rerun under the current pipeline, so it would be the only row of the
% previous generation left in this block.
%\textbf{\method{}} & 2-4-8 & 6.54 & 9.51 & 63.32 & 71.49 & 61.64 & 37.6 & 56.28 & 57.73 & 48.61 & 57.69 & 26.10 \\
%\hdashline[1.5pt/1.5pt]
QuaRot & 2-4-4 & 47.23 & 77.49 & 30.02 & 53.54 & 50.43 & 25.8 & 39.06 & 37.84 & 6.04 & 8.00 & 24.92 \\
ResComp & 2$^{\dagger}$-4-4 & 11.06 & 20.85 & 41.14 & 61.15 & 54.38 & 31.0 & 46.04 & 45.73 & 18.40 & 26.26 & 24.90 \\
ResComp & 2-4-4 & 12.78 & 23.91 & 40.19 & 61.81 & 52.41 & 28.0 & 44.81 & 44.79 & 18.77 & 25.46 & 26.01 \\
Base (uniform grid) & 2-4-4 & 11.27 & 21.41 & 39.59 & 62.51 & 54.30 & 28.4 & 45.01 & 44.94 & 15.18 & 23.35 & 25.55 \\
\textbf{\method{}} & 2-4-4 & \textbf{6.96} & \textbf{9.97} & \textbf{65.26} & \textbf{73.78} & \textbf{62.04} & \textbf{36.0} & \textbf{57.25} & \textbf{59.98} & \textbf{57.44} & \textbf{65.01} & \textbf{33.99} \\
\midrule
\multicolumn{13}{l}{\textbf{LLaMA-2-13B}} \\
FP16 & 16-16-16 & 4.88 & 6.47 & 79.62 & 80.25 & 72.45 & 45.2 & 69.27 & 71.52 & 70.68 & 76.54 & 55.43 \\
AQLM$^{\ast}$ & 2-16-16 & 6.06 & \textbf{7.80} & 72.68 & \textbf{77.58} & 64.80 & 39.8 & 63.25 & 65.42 & \textbf{63.30} & \textbf{72.00} & 45.38 \\
\hdashline[1.5pt/1.5pt]
QuaRot & 2-16-16 & 12.17 & 24.11 & 45.14 & 62.30 & 56.59 & 28.8 & 46.32 & 46.59 & 31.11 & 40.42 & 25.44 \\
DB-LLM & 2-16-16 & 7.97 & 17.61 & 59.40 & 68.77 & 59.67 & 34.4 & 53.75 & 55.28 & 41.63 & 51.14 & 26.62 \\
ResComp & 2-16-16 & 7.90 & 14.05 & 53.24 & 68.66 & 62.83 & 33.4 & 52.46 & 54.12 & 38.95 & 50.01 & 27.97 \\
Base (uniform grid) & 2-16-16 & 7.78 & 14.43 & 50.14 & 67.63 & 62.75 & 33.2 & 51.63 & 53.14 & 34.80 & 48.40 & 27.38 \\
\textbf{\method{}} & 2-16-16 & \textbf{5.86} & 8.43 & \textbf{74.39} & 77.53 & \textbf{69.14} & \textbf{40.8} & \textbf{63.97} & \textbf{66.34} & 62.90 & 71.43 & \textbf{45.54} \\
\hdashline[1.5pt/1.5pt]
QuaRot & 2-4-4 & \multicolumn{2}{c|}{\emph{diverged}} & 25.96 & 50.00 & 51.78 & 27.2 & 35.29 & 36.40 & \multicolumn{3}{c}{\emph{at chance}} \\
% LLaMA-2-13B ResComp 2-dagger-4-4 withheld.
%ResComp & 2$^{\dagger}$-4-4 & 8.41 & 14.69 & 51.89 & 67.08 & 60.14 & 32.8 & 50.77 & 52.06 &  &  & \\
ResComp & 2-4-4 & 9.70 & 19.10 & 43.38 & 64.53 & 59.51 & 31.4 & 47.88 & 48.32 & 26.41 & 40.60 & 27.19 \\
Base (uniform grid) & 2-4-4 & 8.91 & 16.54 & 45.55 & 65.61 & 57.85 & 29.8 & 48.29 & 49.22 & 24.80 & 37.96 & 25.96 \\
\textbf{\method{}} & 2-4-4 & \textbf{6.04} & \textbf{8.81} & \textbf{72.64} & \textbf{76.28} & \textbf{65.43} & \textbf{39.4} & \textbf{62.14} & \textbf{64.43} & \textbf{61.44} & \textbf{69.88} & \textbf{43.74} \\
\midrule
\multicolumn{13}{l}{\textbf{Qwen3-8B-Base}} \\
FP16 & 16-16-16 & 7.00 & 10.42 & 78.60 & 79.54 & 72.06 & 41.8 & 70.18 & 73.26 & 66.02 & 70.83 & 76.84 \\
\hdashline[1.5pt/1.5pt]
QuaRot & 2-16-16 & 12.58 & 23.02 & 52.47 & 67.79 & 61.88 & 31.6 & 53.29 & 55.55 & 43.78 & 49.56 & 43.93 \\
DB-LLM & 2-16-16 & 11.62 & 23.33 & 50.92 & 64.64 & 57.46 & 35.6 & 49.84 & 50.88 & 32.18 & 40.68 & 27.55 \\
ResComp & 2-16-16 & 11.21 & 22.82 & 49.11 & 65.51 & 61.25 & 31.4 & 51.56 & 52.55 & 39.90 & 43.84 & 36.06 \\
Base (uniform grid) & 2-16-16 & 11.55 & 25.78 & 45.20 & 64.42 & 58.48 & 30.8 & 49.43 & 50.52 & 40.52 & 40.33 & 30.69 \\
\textbf{\method{}} & 2-16-16 & \textbf{8.84} & \textbf{13.64} & \textbf{70.28} & \textbf{74.97} & \textbf{67.96} & \textbf{42.0} & \textbf{63.85} & \textbf{66.00} & \textbf{57.91} & \textbf{65.11} & \textbf{59.94} \\
\hdashline[1.5pt/1.5pt]
QuaRot & 2-4-4 & 20.38 & 40.50 & 43.89 & 60.66 & 53.12 & 29.0 & 46.60 & 47.03 & 30.06 & 32.49 & 30.52 \\
ResComp & 2$^{\dagger}$-4-4 & 13.47 & 25.73 & 46.60 & 64.09 & 56.91 & 32.6 & 50.00 & 50.68 & 31.30 & 36.77 & 32.43 \\
ResComp & 2-4-4 & 14.85 & 31.40 & 43.53 & 63.33 & 54.22 & 28.0 & 47.65 & 48.62 & 30.49 & 35.69 & 30.86 \\
Base (uniform grid) & 2-4-4 & 15.56 & 36.47 & 39.39 & 60.45 & 52.72 & 29.4 & 45.06 & 44.69 & 22.10 & 25.60 & 28.30 \\
\textbf{\method{}} & 2-4-4 & \textbf{10.69} & \textbf{16.59} & \textbf{62.86} & \textbf{72.09} & \textbf{60.22} & \textbf{39.0} & \textbf{59.17} & \textbf{61.24} & \textbf{49.62} & \textbf{54.76} & \textbf{47.60} \\
\midrule
\multicolumn{13}{l}{\textbf{LLaMA-30B}} \\
FP16 & 16-16-16 & 4.10 & 5.98 & 83.01 & 81.83 & 75.85 & 49.2 & 72.36 & 74.70 & 73.06 & 77.61 & 58.30 \\
\hdashline[1.5pt/1.5pt]
% TODO: QuaRot 2-16-16 for this block.
ResComp & 2-16-16 & 6.45 & 10.38 & 63.98 & 74.32 & 69.30 & 38.0 & 60.06 & 62.73 & 54.57 & 67.20 & 32.00 \\
Base (uniform grid) & 2-16-16 & 6.33 & 10.10 & 64.91 & 74.70 & 68.59 & 39.0 & 59.91 & 62.76 & 54.63 & 68.33 & 31.29 \\
\textbf{\method{}} & 2-16-16 & \textbf{5.14} & \textbf{7.64} & \textbf{79.23} & \textbf{78.67} & \textbf{71.98} & \textbf{42.4} & \textbf{67.06} & \textbf{69.51} & \textbf{68.21} & \textbf{74.83} & \textbf{49.96} \\
\hdashline[1.5pt/1.5pt]
QuaRot & 2-4-4 & 99.92 & 961.80 & 46.67 & 61.04 & 57.38 & 31.6 & 48.31 & 49.10 & 21.42 & 22.74 & 25.99 \\
ResComp & 2-4-4 & 6.88 & 11.14 & 61.29 & 72.85 & 67.48 & 36.0 & 56.62 & 59.47 & 48.44 & 62.22 & 29.67 \\
Base (uniform grid) & 2-4-4 & 6.86 & 11.04 & 60.44 & 71.44 & 66.77 & 37.2 & 56.60 & 59.08 & 48.38 & 61.91 & 28.42 \\
\textbf{\method{}} & 2-4-4 & \textbf{5.71} & \textbf{7.79} & \textbf{76.29} & \textbf{77.31} & \textbf{71.19} & \textbf{43.2} & \textbf{65.31} & \textbf{68.18} & \textbf{63.01} & \textbf{71.92} & \textbf{38.64} \\
\bottomrule
\end{tabular}}
\par
$^{\ast}$ Lookup decoding. $^{\dagger}$ Group size 128.
\end{center}
% \vspace{-6mm}
\end{table}

\textbf{Weight-only language-modeling accuracy.}
\Cref{tab:main} evaluates the complete method at W2A16KV16, where \method{} achieves the lowest WikiText2 perplexity among the compared scalar methods across model families.
On LLaMA-2-7B, perplexity decreases from 9.56 with Base to 6.62, compared with 8.89 for DB-LLM and 7.24 for AQLM.
This advantage extends to LLaMA-2-13B, Qwen3-8B-Base, and LLaMA-30B; LLaMA-3-8B results are in Appendix~\ref{app:llama3-results}.

\textbf{Downstream accuracy.}
The improvements extend to downstream tasks.
At W2A16KV16, \method{} raises the seven-task average on LLaMA-2-7B from 48.00\% with Base to 59.63\% and improves over the compared scalar baselines across model families.
LAMBADA shows a similar trend, while AQLM remains stronger on some LLaMA-2-13B and LLaMA-3-8B tasks.
These results show that the gains extend beyond language-modeling perplexity.

\textbf{Joint weight-activation-KV quantization.}
Under W2A4KV4, \method{} retains the lowest WikiText2 perplexity and the highest seven-task average among the compared scalar methods in each model block.
On LLaMA-2-7B, perplexity increases from 6.62 to 6.96, 1.49 above FP16.
The complete method thus remains effective when activation and cache quantization add error.

\subsection{Component analysis}
\label{sec:ablations}

\begin{table}[!htbp]
\caption{Component ablation on LLaMA-2-7B. \emph{Base} uses rotation, GPTAQ, and a uniform grid without finetuning. Rows add IAC, KIC, or both; Base + IAC + KIC is \method{}. All rows use 512 WikiText2 calibration sequences. Bold marks the best quantized result per precision configuration.}
\label{tab:component-ablation}
\begin{center}
\scriptsize
\renewcommand{\arraystretch}{1.08}
\setlength{\tabcolsep}{2.2pt}
\resizebox{\textwidth}{!}{%
\begin{tabular}{ll|cc|ccccccc|cc|ccc}
\toprule
\multirow{2}{*}{\textbf{Variant}} & \multirow{2}{*}{\textbf{W-A-KV}}
 & \multicolumn{2}{c|}{\textbf{Perplexity}\,$\downarrow$}
 & \multicolumn{2}{c}{\textbf{ARC}} & \multirow{2}{*}{\textbf{HellaS}} & \multirow{2}{*}{\textbf{PIQA}} & \multirow{2}{*}{\textbf{WinoG}} & \multirow{2}{*}{\textbf{BoolQ}} & \multirow{2}{*}{\textbf{OBQA}}
 & \multicolumn{2}{c|}{\textbf{Avg.}\,$\uparrow$}
 & \multicolumn{2}{c}{\textbf{LAMBADA}} & \multirow{2}{*}{\textbf{MMLU}$_{5}$} \\
\cmidrule(lr){3-4}\cmidrule(lr){5-6}\cmidrule(lr){12-13}\cmidrule(lr){14-15}
 &  & Wiki2 & C4 & Easy & Chg &  &  &  &  &  & 7-task & 5-task & Std & OpenAI &  \\
\midrule
FP16 & 16-16-16 & 5.47 & 6.97 & 73.86 & 45.05 & 76.16 & 78.78 & 69.46 & 79.36 & 44.0 & 66.67 & 68.66 & 68.81 & 73.65 & 45.94 \\
\midrule
Base & 2-16-16 & 9.56 & 18.43 & 49.24 & 26.71 & 44.50 & 66.16 & 57.93 & 62.29 & 29.2 & 48.00 & 48.91 & 24.45 & 32.37 & 26.70 \\
Base + IAC & 2-16-16 & 7.78 & 12.53 & 51.43 & 28.84 & 52.98 & 67.03 & 61.33 & 62.35 & 34.2 & 51.17 & 52.32 & 39.14 & 51.64 & 28.73 \\
Base + KIC & 2-16-16 & 9.56 & 18.42 & 49.37 & 26.79 & 44.45 & 66.16 & 57.46 & 62.29 & 29.4 & 47.99 & 48.85 & 24.49 & 32.52 & 26.73 \\
Base + IAC + KIC & 2-16-16 & \textbf{7.11} & \textbf{9.74} & \textbf{60.19} & \textbf{36.35} & \textbf{65.70} & \textbf{74.05} & \textbf{63.93} & \textbf{68.69} & \textbf{37.4} & \textbf{58.04} & \textbf{60.04} & \textbf{56.90} & \textbf{67.16} & \textbf{29.54} \\
% The full-pipeline row carried the same arm as IAC + KIC above, differing only in the
% calibration recipe; with both on the current pipeline it is pure duplication.
%\method{} (full pipeline) & 2-16-16 & 7.11 & \textbf{9.74} & \textbf{60.19} & \textbf{36.35} & \textbf{65.70} & \textbf{74.05} & \textbf{63.93} & 68.69 & 37.4 & \textbf{58.04} & \textbf{60.04} & \textbf{56.90} & \textbf{67.16} & \textbf{29.54} \\
\addlinespace[3pt]
Base & 2-4-4 & 11.27 & 21.41 & 43.22 & 25.09 & 39.59 & 62.51 & 54.30 & 61.99 & 28.4 & 45.01 & 44.94 & 15.18 & 23.35 & 25.55 \\
Base + IAC & 2-4-4 & 9.04 & 15.56 & 51.85 & 30.20 & 46.96 & 66.38 & 58.56 & 62.45 & 32.4 & 49.83 & 50.79 & 26.82 & 37.69 & \textbf{26.46} \\
Base + KIC & 2-4-4 & 11.33 & 21.50 & 44.11 & 25.26 & 40.46 & 62.79 & 55.41 & 62.02 & 28.4 & 45.49 & 45.61 & 15.85 & 23.33 & 25.69 \\
Base + IAC + KIC & 2-4-4 & \textbf{7.53} & \textbf{10.20} & \textbf{54.46} & \textbf{34.47} & \textbf{63.90} & \textbf{71.49} & \textbf{62.27} & \textbf{63.98} & \textbf{34.4} & \textbf{55.00} & \textbf{57.32} & \textbf{50.28} & \textbf{61.79} & 26.09 \\
\bottomrule
\end{tabular}}
\end{center}
\end{table}

\Cref{tab:component-ablation} compares the components under a shared 512-sequence calibration protocol.
At W2A16KV16, adding IAC to Base lowers WikiText2 perplexity from 9.56 to 7.78; applying KIC afterward further lowers it to 7.11 and raises the seven-task average from 51.17\% to 58.04\%.
The same progression holds at W2A4KV4.
KIC alone leaves Base perplexity almost unchanged in both settings, whereas it improves the IAC initialization.
These results support combining adaptive reconstruction with subsequent predictive calibration.
Additional level-placement and representation controls are provided in \cref{tab:quantizer-ablation}.
To isolate the contribution of sequence selection, we fix the candidate pool, calibration budget, and optimization recipe while varying only the selected sequences.
On Qwen3-8B, tail-damage selection lowers WikiText2 perplexity from 17.05 with random selection to 16.69 (Appendix~\ref{app:ablations}, \cref{tab:selection}).

\subsection{Inference efficiency}
\label{sec:efficiency}

For LLaMA-2-7B, \method{} reduces linear-weight storage by 8.0$\times$ relative to FP16.
At batch size 1 on an RTX A6000, its W2A16 linear stack decodes in 5.11\,ms/token, against 8.83\,ms for QuIP\# E8P and 9.93\,ms for AQLM $2\times8$ at matched two-bit storage, and 4.72\,ms for the uniform grid whose perplexity is 9.56 rather than 6.62 (\cref{fig:tradeoff,tab:latency}).
Full-model decode throughput in HuggingFace eager mode is 47.0 tokens/s, versus 35.9 for FP16 (\cref{tab:e2e}).
Appendix~\ref{app:kernel} gives per-method timings and measurement details.

% ====================================================================

\section{Discussion}
\label{sec:discussion}

At a fixed W2A4KV4 budget, further gains in predictive fidelity may come from jointly optimizing source shaping, reconstruction levels, and discrete assignments.
Quantization loss could guide the source transform, while prediction feedback could refine both levels and assignments.
Accounting for interactions among weight, activation, and cache errors as they propagate through layers and attention offers another direction for reducing loss within the same compact representation.

Realizing the deployment benefits of W2A4KV4 also requires kernels adapted to its mixed precisions.
The evaluated GPU lacks native INT2-by-INT4 matrix multiplication; our current fused kernel accelerates two-bit weights with 16-bit activations at batch size one.
Integrating sign-plane computation with four-bit activations and cache access, and extending fusion to prefill and larger batches, are directions for translating joint quantization into broader inference gains.

% ====================================================================
\section{Conclusion}
\label{sec:conclusion}

\method{} preserves predictive fidelity at two bits by combining source-adaptive scalar reconstruction with targeted calibration.
IAC first shapes the weight distribution, then places reconstruction levels to minimize distortion, preserving the resulting level set through exact arithmetic decoding.
KIC uses residual teacher-student disagreement to select calibration sequences and refines only the two reconstruction magnitudes per row.
Experiments on LLaMA and Qwen show improved accuracy over matched scalar baselines in weight-only and joint quantization.
The fused implementation also realizes storage savings and faster decoding, demonstrating that source-adaptive reconstruction can retain the execution advantages of scalar quantization.

% ====================================================================
\clearpage
\clearpage

\bibliography{refs}
\bibliographystyle{iclr2027_conference}

\clearpage
\appendix

\section{Theoretical analysis}
\label{app:theory}

Throughout, $\boldsymbol{w}\in\mathbb{R}^{m\times n}$ is a weight matrix with rows $\boldsymbol{w}_r$, $\sigma_r^2=\frac1n\|\boldsymbol{w}_r\|_2^2$, and $\boldsymbol{q}$ is a randomized Hadamard transform built from $\boldsymbol{u}_n=n^{-1/2}\boldsymbol{u}$, with $\boldsymbol{u}$ a Sylvester-Hadamard matrix ($\boldsymbol{u}^\top \boldsymbol{u}=n\boldsymbol{i}$, first row all ones, $\boldsymbol{i}$ the identity) and $\boldsymbol{r}=\mathrm{diag}(\boldsymbol{\xi})$, $\boldsymbol{\xi}\in\{\pm1\}^n$ uniform; the two orderings $\boldsymbol{q}=\boldsymbol{r}\boldsymbol{u}_n$ and $\boldsymbol{q}=\boldsymbol{u}_n\boldsymbol{r}$ are distinguished where it matters.
Rotated rows are $\boldsymbol{w}'_r=\boldsymbol{w}_r\boldsymbol{q}$, and $\overline{\boldsymbol{w}'_r}:=\frac1n\sum_j w'_{rj}$ is the row mean.

\subsection{Source centering}

\begin{lemma}[Mean under randomized rotation]
\label{lem:zeromean}
$\boldsymbol{u}_n\boldsymbol{1}=\sqrt{n}\,\boldsymbol{e}_1$.
With $\boldsymbol{q}=\boldsymbol{u}_n\boldsymbol{r}$ every rotated row has mean zero in expectation over $\boldsymbol{\xi}$, with $\mathbb{E}_{\boldsymbol{\xi}}[\overline{\boldsymbol{w}'_r}]=0$ and $\mathrm{sd}_{\boldsymbol{\xi}}[\overline{\boldsymbol{w}'_r}]=\sigma_r n^{-1/2}$ regardless of outliers.
With $\boldsymbol{q}=\boldsymbol{r}\boldsymbol{u}_n$ the row mean is $n^{-1/2}\xi_1w_{r1}$: zero in expectation, with standard deviation $n^{-1/2}|w_{r1}|$.
\end{lemma}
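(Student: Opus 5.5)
The whole argument reduces to one identity: the row mean is a linear functional of the rotated row, $\overline{\boldsymbol{w}'_r}=\frac1n\boldsymbol{w}_r\boldsymbol{q}\boldsymbol{1}$. So I would first compute $\boldsymbol{q}\boldsymbol{1}$ for each ordering and then take moments over $\boldsymbol{\xi}$.

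\textbf{Step 1: $\boldsymbol{u}_n\boldsymbol{1}=\sqrt{n}\,\boldsymbol{e}_1$.}
Since $\boldsymbol{u}$ is square with $\boldsymbol{u}^\top\boldsymbol{u}=n\boldsymbol{i}$, we also have $\boldsymbol{u}\boldsymbol{u}^\top=n\boldsymbol{i}$, so the rows of $\boldsymbol{u}$ are mutually orthogonal with squared norm $n$.
The first row is $\boldsymbol{1}^\top$, hence $(\boldsymbol{u}\boldsymbol{1})_i=\langle \text{row}_i,\text{row}_1\rangle=n\,\delta_{i1}$.
Therefore $\boldsymbol{u}\boldsymbol{1}=n\boldsymbol{e}_1$, and scaling by $n^{-1/2}$ gives the claim.

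\textbf{Step 2: the ordering $\boldsymbol{q}=\boldsymbol{u}_n\boldsymbol{r}$.}
Here $\boldsymbol{q}\boldsymbol{1}=\boldsymbol{u}_n\boldsymbol{\xi}$, so $\overline{\boldsymbol{w}'_r}=\frac1n\langle\boldsymbol{v},\boldsymbol{\xi}\rangle$ with $\boldsymbol{v}:=\boldsymbol{w}_r\boldsymbol{u}_n$.
The entries $\xi_j$ are independent Rademacher variables, so $\mathbb{E}_{\boldsymbol{\xi}}[\overline{\boldsymbol{w}'_r}]=0$.
For the variance, $\mathbb{E}[\boldsymbol{\xi}\boldsymbol{\xi}^\top]=\boldsymbol{i}$ gives $\mathrm{Var}=n^{-2}\|\boldsymbol{v}\|_2^2$.
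Orthogonality of $\boldsymbol{u}_n$ gives $\|\boldsymbol{v}\|_2^2=\|\boldsymbol{w}_r\|_2^2=n\sigma_r^2$.
Hence $\mathrm{sd}=\sigma_r n^{-1/2}$. This depends only on the row energy, not on how that energy is distributed across coordinates, which is the ``regardless of outliers'' statement.

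\textbf{Step 3: the ordering $\boldsymbol{q}=\boldsymbol{r}\boldsymbol{u}_n$.}
By Step 1, $\boldsymbol{q}\boldsymbol{1}=\boldsymbol{r}\sqrt{n}\,\boldsymbol{e}_1=\sqrt{n}\,\xi_1\boldsymbol{e}_1$.
So $\overline{\boldsymbol{w}'_r}=\frac{\sqrt n}{n}\xi_1 w_{r1}=n^{-1/2}\xi_1w_{r1}$.
Its mean is zero, and since $\xi_1^2=1$ its standard deviation is $n^{-1/2}|w_{r1}|$.

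No step is a real obstacle; the only point needing care is Step 1. There one must use row orthogonality, which follows from squareness, rather than the column orthogonality given directly by $\boldsymbol{u}^\top\boldsymbol{u}=n\boldsymbol{i}$. Alternatively, one can use the symmetry of the Sylvester construction.
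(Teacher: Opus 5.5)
Your proof is correct and follows essentially the same route as the paper. Both write the row mean as $\frac1n\boldsymbol{w}_r\boldsymbol{q}\boldsymbol{1}$. For $\boldsymbol{q}=\boldsymbol{r}\boldsymbol{u}_n$, both use $\boldsymbol{u}\boldsymbol{1}=n\boldsymbol{e}_1$ (the paper states it in coordinates as $\sum_j u_{ij}=n\,\mathbf{1}[i=1]$) to get $n^{-1/2}\xi_1 w_{r1}$. For $\boldsymbol{q}=\boldsymbol{u}_n\boldsymbol{r}$, both use $\boldsymbol{u}_n\boldsymbol{u}_n^\top=\boldsymbol{i}$ and $\mathbb{E}[\boldsymbol{\xi}\boldsymbol{\xi}^\top]=\boldsymbol{i}$ to get variance $n^{-2}\|\boldsymbol{w}_r\|_2^2$. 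The differences are cosmetic: you attach $\boldsymbol{u}_n$ to $\boldsymbol{w}_r$ rather than to $\boldsymbol{\xi}$, and you justify the row orthogonality of $\boldsymbol{u}$ that the paper simply asserts.
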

\begin{proof}
$\boldsymbol{u}_n\boldsymbol{1}=n^{-1/2}\boldsymbol{u}\boldsymbol{1}=n^{-1/2}(n,0,\dots,0)^\top=\sqrt{n}\,\boldsymbol{e}_1$, since the first row of $\boldsymbol{u}$ is all ones and the other rows are orthogonal to it.
For $\boldsymbol{q}=\boldsymbol{r}\boldsymbol{u}_n$ and a fixed $\boldsymbol{r}$, $w'_{rj}=n^{-1/2}\sum_i w_{ri}\xi_iu_{ij}$, so $\overline{\boldsymbol{w}'_r}=n^{-3/2}\sum_i w_{ri}\xi_i\sum_j u_{ij}=n^{-1/2}w_{r1}\xi_1$ by $\sum_j u_{ij}=n\,\mathbf{1}[i=1]$; this is not small when $w_{r1}$ is an outlier, and randomizing $\xi_1$ gives expectation $0$ and standard deviation $n^{-1/2}|w_{r1}|$.
For $\boldsymbol{q}=\boldsymbol{u}_n\boldsymbol{r}$, $\overline{\boldsymbol{w}'_r}=\frac1n\sum_i w_{ri}(\boldsymbol{u}_n\boldsymbol{\xi})_i$ with $\mathbb{E}[(\boldsymbol{u}_n\boldsymbol{\xi})_i(\boldsymbol{u}_n\boldsymbol{\xi})_k]=\delta_{ik}$, so the expectation is $0$ and the standard deviation is $\frac1n\|\boldsymbol{w}_r\|_2=\sigma_r n^{-1/2}$.
\end{proof}
Both orderings center the row mean in expectation, while its variance depends on the placement of the signs.
For $\boldsymbol{q}=\boldsymbol{r}\boldsymbol{u}_n$, each output coordinate is a Rademacher sum with variance $\sigma_r^2$ and the same marginal law.
A Gaussian approximation applies when no individual summand dominates the row energy; the empirical effect of the rotation is examined in \cref{fig:iac}.
The source model and the analysis below do not require pairwise independence of the rotated coordinates.

\subsection{Scalar distortion and layer loss}

\begin{proposition}[Sequential sensitivity decomposition]
\label{prop:unweighted}
Let $\boldsymbol{h}=2\boldsymbol{x}\boldsymbol{x}^\top+\lambda\boldsymbol{i}$ be positive definite, where rotated calibration activations are arranged as columns and $\lambda\ge0$ is the damping parameter. Define $\ell_r=(\boldsymbol{w}_r-\hat{\boldsymbol{w}}_r)\boldsymbol{h}(\boldsymbol{w}_r-\hat{\boldsymbol{w}}_r)^\top$. For $\lambda=0$, this is twice the squared output-reconstruction error. Quantize columns in a fixed order with optimal compensation of the remaining columns~\citep{gptq}.
Write $\tilde w_{rj}$ for the compensated value of coordinate $j$ at the moment it is quantized, $q_{rj}$ for its reconstruction, and $e_{rj}=\tilde w_{rj}-q_{rj}$.
\begin{enumerate}
\item[(i)] \emph{Exact diagonalization.} The layer loss of row $r$ equals
\[
\ell_r=\sum_{j=1}^n d_j\,e_{rj}^2,\qquad d_j=\frac{1}{[(\boldsymbol{h}_{f_j})^{-1}]_{jj}}>0,
\]
where $f_j$ is the set of columns not yet quantized at step $j$. The sensitivities $d_j$ depend only on $\boldsymbol{h}$ and the order, not on the row.
\item[(ii)] \emph{Exact decomposition.} With $\bar d=\frac1n\sum_j d_j$ and $\overline{e_r^2}=\frac1n\sum_j e_{rj}^2$, as an identity,
\[
\ell_r=\bar d\sum_j e_{rj}^2+n\,\mathrm{Cov}_j\big(d_j,e_{rj}^2\big),\qquad
\mathrm{Cov}_j(\boldsymbol{d},e^2):=\frac1n\sum_j (d_j-\bar d)\big(e_{rj}^2-\overline{e_r^2}\big).
\]
\item[(iii)] \emph{Deterministic bound.} For $\overline{e_r^2}>0$, with $\mathrm{CV}(\boldsymbol{d})=\mathrm{sd}_j(\boldsymbol{d})/\bar d$ and $\mathrm{CV}(e_r^2)=\mathrm{sd}_j(e_r^2)/\overline{e_r^2}$,
\[
\frac{\big|\ell_r-\bar d\sum_j e_{rj}^2\big|}{\bar d\sum_j e_{rj}^2}\;\le\;\mathrm{CV}(\boldsymbol{d})\cdot\mathrm{CV}(e_r^2)=:\varepsilon_r .
\]
If the relative-gap bound holds uniformly with some $\varepsilon<1$ over the candidate level sets, and $m_{\mathrm{unw}}$ minimizes $\sum_j e_{rj}^2$ and $m^\star$ minimizes $\ell_r$ over level sets, then $\ell_r(m_{\mathrm{unw}})\le\frac{1+\varepsilon}{1-\varepsilon}\,\ell_r(m^\star)$.
\end{enumerate}
\end{proposition}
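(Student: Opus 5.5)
The plan is to prove (i) by an induction over the quantization order that identifies GPTQ's compensation steps with a triangular factorization of $\boldsymbol{h}$. Parts (ii) and (iii) then reduce to elementary algebra and Cauchy--Schwarz.

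\textbf{Part (i).} Write $\boldsymbol{\delta}_r=\boldsymbol{w}_r-\hat{\boldsymbol{w}}_r$ and let $f_j=\{j,\dots,n\}$ under the fixed order.
\begin{itemize}
\item \emph{One step.} Suppose coordinate $j$ is quantized with error $e_{rj}$. Minimizing the quadratic $\boldsymbol{\delta}\boldsymbol{h}\boldsymbol{\delta}^\top$ over the remaining coordinates $f_j\setminus\{j\}$, with the $j$-th entry fixed, is a Schur-complement computation. It gives the compensation $\boldsymbol{\delta}_{f_j}=e_{rj}\,\boldsymbol{\rho}_j$ with $\boldsymbol{\rho}_j=[(\boldsymbol{h}_{f_j})^{-1}]_{j,:}/[(\boldsymbol{h}_{f_j})^{-1}]_{jj}$. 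The restricted quadratic form at that minimizer equals $e_{rj}^2/[(\boldsymbol{h}_{f_j})^{-1}]_{jj}=d_j e_{rj}^2$.
\item \emph{Assembly.} Stack the rows $\boldsymbol{\rho}_j$, padded with zeros on the already-quantized columns, into a unit upper-triangular matrix $\boldsymbol{R}$. Since the final error is the superposition of the per-step error contributions, $\boldsymbol{\delta}_r=\boldsymbol{e}_r\boldsymbol{R}$.
\item \emph{Factorization.} Show that $\boldsymbol{h}=\boldsymbol{R}^\top\operatorname{diag}(\boldsymbol{d})\boldsymbol{R}$. This is the $LDL^\top$ factorization of $\boldsymbol{h}$ in the given order, checked column-by-column by induction: eliminating coordinate $j$ leaves exactly the Schur complement $\boldsymbol{h}_{f_{j+1}}-\ldots$ whose inverse is the corresponding block of $(\boldsymbol{h}_{f_j})^{-1}$.
\item \emph{Conclusion.} It follows that $\ell_r=\boldsymbol{e}_r\operatorname{diag}(\boldsymbol{d})\boldsymbol{e}_r^\top=\sum_j d_je_{rj}^2$. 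Positive definiteness gives $d_j>0$, and $\boldsymbol{R}$ and $\boldsymbol{d}$ depend only on $\boldsymbol{h}$ and the order.
\end{itemize}
The main obstacle is this identification of the GPTQ update rows with the $LDL^\top$ factor. In particular, it must be checked that the compensation at step $j$ is optimal given all previous steps, so that the total is a sum of decoupled terms rather than merely a bound. I would argue it through the Schur-complement recursion $(\boldsymbol{h}_{f_{j+1}})^{-1}=[(\boldsymbol{h}_{f_j})^{-1}]_{f_{j+1}f_{j+1}}-\frac{[(\boldsymbol{h}_{f_j})^{-1}]_{f_{j+1},j}[(\boldsymbol{h}_{f_j})^{-1}]_{j,f_{j+1}}}{[(\boldsymbol{h}_{f_j})^{-1}]_{jj}}$, which is exactly the inverse-Hessian update used by \citet{gptq}.

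\textbf{Part (ii).} Expand $\sum_j d_je_{rj}^2=\sum_j(d_j-\bar d)(e_{rj}^2-\overline{e_r^2})+\bar d\sum_je_{rj}^2$. The cross term involving $\sum_j(d_j-\bar d)\,\overline{e_r^2}$ vanishes because $\sum_j(d_j-\bar d)=0$. The first sum is $n\,\mathrm{Cov}_j(\boldsymbol{d},e_r^2)$.

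\textbf{Part (iii).} By Cauchy--Schwarz, $|\mathrm{Cov}_j|\le\mathrm{sd}_j(\boldsymbol{d})\,\mathrm{sd}_j(e_r^2)$. Divide by $\bar d\sum_je_{rj}^2=n\bar d\,\overline{e_r^2}$ to get the relative-gap bound $\varepsilon_r$. For the comparison, write $S(m)=\sum_je_{rj}^2$ for the errors generated by running the sequential procedure with level set $m$. The uniform bound gives $(1-\varepsilon)\bar dS(m)\le\ell_r(m)\le(1+\varepsilon)\bar dS(m)$ for every candidate $m$. Then
\[
\ell_r(m_{\mathrm{unw}})\le(1+\varepsilon)\bar dS(m_{\mathrm{unw}})\le(1+\varepsilon)\bar dS(m^\star)\le\tfrac{1+\varepsilon}{1-\varepsilon}\ell_r(m^\star),
\]
using $\varepsilon<1$ in the last step.
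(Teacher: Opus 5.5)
Parts (ii) and (iii) are correct and match the paper's argument: expand around $\bar d$, apply Cauchy--Schwarz, then use the sandwich $(1-\varepsilon)\bar d S(m)\le\ell_r(m)\le(1+\varepsilon)\bar d S(m)$, which relies on $\bar d$ not depending on $m$. The problem is the key step of (i).

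Your assembly $\boldsymbol{\delta}_r=\boldsymbol{e}_r\boldsymbol{R}$ is right. It gives $\ell_r=\boldsymbol{e}_r\boldsymbol{R}\boldsymbol{h}\boldsymbol{R}^\top\boldsymbol{e}_r^\top$, so what you need is $\boldsymbol{R}\boldsymbol{h}\boldsymbol{R}^\top=\operatorname{diag}(\boldsymbol{d})$. The identity you set out to prove, $\boldsymbol{h}=\boldsymbol{R}^\top\operatorname{diag}(\boldsymbol{d})\boldsymbol{R}$, would not give this even if it held, and it is false.

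Take $n=2$ and $\boldsymbol{h}=\begin{pmatrix}a&b\\ b&c\end{pmatrix}$. Then $\boldsymbol{R}=\begin{pmatrix}1&-b/c\\ 0&1\end{pmatrix}$ and $\boldsymbol{d}=\big((ac-b^2)/c,\;c\big)$. One checks $\boldsymbol{R}\boldsymbol{h}\boldsymbol{R}^\top=\operatorname{diag}(\boldsymbol{d})$, whereas $[\boldsymbol{R}^\top\operatorname{diag}(\boldsymbol{d})\boldsymbol{R}]_{11}=a-b^2/c\neq a$ whenever $b\neq0$.

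Relatedly, the forward $LDL^\top$ factorization of $\boldsymbol{h}$ has pivots $1/[(\boldsymbol{h}_{\{1,\dots,j\}})^{-1}]_{jj}$, built from leading blocks (its first pivot is $h_{11}$). Your $d_j$ are built from trailing blocks. The Schur recursion you quote eliminates inside $(\boldsymbol{h}_{f_j})^{-1}$, so it factors $\boldsymbol{h}^{-1}$, not $\boldsymbol{h}$. The correct statement is $\boldsymbol{h}^{-1}=\boldsymbol{R}^\top\operatorname{diag}(\boldsymbol{d})^{-1}\boldsymbol{R}$; equivalently, GPTQ's upper Cholesky factor of $\boldsymbol{h}^{-1}$ is $\operatorname{diag}(\boldsymbol{d})^{-1/2}\boldsymbol{R}$. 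This is what yields $\boldsymbol{R}\boldsymbol{h}\boldsymbol{R}^\top=\operatorname{diag}(\boldsymbol{d})$.

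The repair is short and uses only what you already have. For $k\in f_j$,
\[
(\boldsymbol{\rho}_j\boldsymbol{h})_k=\frac{[(\boldsymbol{h}_{f_j})^{-1}\boldsymbol{h}_{f_j}]_{jk}}{[(\boldsymbol{h}_{f_j})^{-1}]_{jj}}=d_j\,\mathbf{1}[k=j].
\]
So $\boldsymbol{\rho}_j\boldsymbol{h}$ vanishes on $f_{j+1}$, which contains the support of every $\boldsymbol{\rho}_k$ with $k>j$. Also $\boldsymbol{\rho}_j\boldsymbol{h}\boldsymbol{\rho}_j^\top=d_j\rho_{jj}=d_j$. Hence $\boldsymbol{R}\boldsymbol{h}\boldsymbol{R}^\top=\operatorname{diag}(\boldsymbol{d})$: the per-step contributions are $\boldsymbol{h}$-orthogonal, and your ``main obstacle'' about cross terms disappears.

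With this fix your route is a valid alternative to the paper's. The paper telescopes instead. After each GPTQ update, the compensated weights are the exact minimizer of the loss over the still-free coordinates, so the gradient there vanishes. Each quantization step therefore raises the loss by exactly the OBS increment $d_je_{rj}^2$ from your one-step bullet, and summing from zero loss gives (i) with no factorization. Your matrix version buys an explicit closed form $\boldsymbol{\delta}_r=\boldsymbol{e}_r\boldsymbol{R}$ and makes the orthogonality of the steps visible; the paper's version is shorter.
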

\begin{proof}
(i) For the quadratic loss $(\boldsymbol{w}-\hat{\boldsymbol{w}})\boldsymbol{h}(\boldsymbol{w}-\hat{\boldsymbol{w}})^\top$, quantizing one coordinate $j$ to $q_j$ and re-optimizing all coordinates in $f_j\setminus\{j\}$ increases the loss by exactly $e_j^2/[(\boldsymbol{h}_{f_j})^{-1}]_{jj}$, the optimal-brain-surgeon identity that GPTQ applies column by column~\citep{gptq}.
The increments are additive because each step re-optimizes the remaining free coordinates; summing over $j$ gives the identity.
(ii) $\sum_j d_je_j^2=\sum_j[\bar d+(d_j-\bar d)][\overline{e^2}+(e_j^2-\overline{e^2})]$, and the two cross terms vanish because $\sum_j(d_j-\bar d)=0$ and $\sum_j(e_j^2-\overline{e^2})=0$.
(iii) Cauchy-Schwarz on the covariance gives the bound; the consequence is the sandwich $(1-\varepsilon)\bar d\,u(m)\le \ell(m)\le(1+\varepsilon)\bar d\,u(m)$ for all $m$, with $u=\sum_j e_j^2$.
\end{proof}

The covariance identity isolates the difference between sensitivity-weighted and unweighted sequential residuals.
Smaller sensitivity spread tightens the bound for a given error profile.
Stage 2 fits levels to the transformed weights before sequential compensation, whereas the residuals $e_{rj}$ in this proposition are measured during compensation.
The identity therefore explains the loss diagnostic without asserting that the empirical level fit globally minimizes the compensated layer objective.
GPTAQ additionally accounts for the difference between full-precision and quantized layer inputs~\citep{gptaq}; \cref{app:algorithm} gives its place in the pipeline.

For \cref{tab:cvd-gap}, the Hessian uses 128 sequences of 2048 WikiText2 tokens and damping $\lambda=0.01\,\overline{\mathrm{diag}(2\boldsymbol{x}\boldsymbol{x}^\top)}$.
The reported $d_j$ are the sequential sensitivities in \cref{prop:unweighted}, computed on the remaining coordinates at each step.
Projections sharing an input share its Hessian.
The rotated setting uses the folded pipeline rotation; the \texttt{o} and \texttt{down} projections use a randomized-Hadamard proxy for their online transforms.
The unrotated setting uses the same activations with these transforms removed.
The relative gap is $g_r=\sum_jd_je_{rj}^2/(\bar d\sum_je_{rj}^2)-1$ for the pipeline's W2 sequential residuals.
We report the median and range across the 32 blocks for each projection.

\begin{table}[t]
\caption{Sequential sensitivity spread and weighted-to-unweighted residual gaps on LLaMA-2-7B. Brackets give the range across blocks.}
\label{tab:cvd-gap}
\begin{center}
\small
\setlength{\tabcolsep}{4.5pt}
\begin{tabular}{lrcccc}
\toprule
& & \multicolumn{2}{c}{$\mathrm{CV}(\boldsymbol{d})$} & \multicolumn{2}{c}{$\mathrm{med}_r\,|g_r|$} \\
\cmidrule(lr){3-4}\cmidrule(lr){5-6}
\textbf{Proj.} & $n$ & unrotated & rotated & unrot.\ ($\times10^{-2}$) & rot.\ ($\times10^{-4}$) \\
\midrule
\texttt{q} & 4096 & 0.205 [0.077, 0.278] & 0.015 [0.012, 0.023] & 0.95 [0.44, 67.75] & 3.22 [2.59, 8.15] \\
\texttt{k} & 4096 & 0.205 [0.077, 0.278] & 0.015 [0.012, 0.023] & 2.62 [0.50, 64.68] & 3.25 [2.67, 7.12] \\
\texttt{v} & 4096 & 0.205 [0.077, 0.278] & 0.015 [0.012, 0.023] & 0.36 [0.11, 27.64] & 3.20 [2.65, 4.98] \\
\texttt{o} & 4096 & 0.782 [0.505, 2.717] & 0.017 [0.006, 0.022] & 4.89 [1.94, 94.23] & 3.57 [1.53, 4.71] \\
\texttt{gate} & 4096 & 0.207 [0.078, 0.273] & 0.014 [0.012, 0.020] & 0.57 [0.12, 7.01] & 3.10 [2.69, 4.71] \\
\texttt{up} & 4096 & 0.207 [0.078, 0.273] & 0.014 [0.012, 0.020] & 0.45 [0.10, 5.13] & 3.09 [2.62, 4.54] \\
\texttt{down} & 11008 & 0.793 [0.497, 18.658] & 0.007 [0.001, 0.011] & 1.46 [0.94, 18.19] & 0.93 [0.12, 1.60] \\
\bottomrule
\end{tabular}
\end{center}
\end{table}

Across projection types, the median sensitivity coefficient of variation falls from 0.205 to 0.793 before rotation to 0.007 to 0.017 afterward.
The corresponding median absolute residual gap is approximately $0.9\times10^{-4}$ to $3.6\times10^{-4}$ after rotation.
These measurements support the use of scalar distortion as a practical reconstruction objective in this setting; the deterministic bound itself does not assume a distribution for the covariance term.

\subsection{Exact arithmetic representation}

\begin{proposition}[Losslessness of the sign-amplitude form at $b{=}2$]
\label{prop:lossless}
Consider reconstructions $\hat w=a_1s_1+a_2s_2$ with $s_1,s_2\in\{-1,+1\}$ and $a_1>a_2>0$.
\begin{enumerate}
\item[(i)] The representable set is $\{\pm(a_1+a_2),\pm(a_1-a_2)\}$, and $(m_1,m_2)\mapsto(a_1,a_2)=\big(\tfrac{m_2+m_1}{2},\tfrac{m_2-m_1}{2}\big)$ is a bijection from $\{0<m_1<m_2\}$ onto $\{a_1>a_2>0\}$ with inverse $m_2=a_1+a_2$, $m_1=a_1-a_2$.
Hence the sign-amplitude form represents exactly the antisymmetric zero-free four-point level sets $\{\pm m_1,\pm m_2\}$.
\item[(ii)] Let $p$ be a symmetric density whose four-level MMSE quantizer is unique, as holds for the centered Gaussian reference~\citep{fleischer1964,max1960}.
Then the unconstrained optimum is antisymmetric and zero-free, hence representable, and
\[
\min_{a_1>a_2>0}\ \mathbb{E}_p\big[(w-a_1s_1(w)-a_2s_2(w))^2\big]\;=\;\min_{m\in\mathbb{R}^4}\ \mathbb{E}_p\big[(w-q_m(w))^2\big].
\]
\end{enumerate}
\end{proposition}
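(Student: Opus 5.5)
Part (i) is a direct enumeration, and part (ii) comes from a symmetry-plus-uniqueness argument.

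For (i), list the four sign pairs $(s_1,s_2)\in\{\pm1\}^2$. They give $a_1+a_2$, $a_1-a_2$, $-(a_1-a_2)$ and $-(a_1+a_2)$. Since $a_1>a_2>0$, these four values are distinct and nonzero, and they form the set $\{\pm m_1,\pm m_2\}$ with $m_1=a_1-a_2$ and $m_2=a_1+a_2$, so that $0<m_1<m_2$. The stated linear map $(m_1,m_2)\mapsto(a_1,a_2)$ is invertible, and its inverse is the one written in the statement. To check that it is a bijection between the two open cones, I will verify both inclusions. If $0<m_1<m_2$, then $a_2=(m_2-m_1)/2>0$ and $a_1-a_2=m_1>0$. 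Conversely, if $a_1>a_2>0$, then $m_1=a_1-a_2>0$ and $m_2-m_1=2a_2>0$. So the sign-amplitude form represents every antisymmetric, zero-free four-point set, and nothing else.

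For (ii), the main step is to show that the unique MMSE optimum $\mathcal C^\star$ is antisymmetric. The reflection $w\mapsto -w$ preserves $p$. It therefore maps any level set $\mathcal C$ to $-\mathcal C$ without changing the distortion $D(\mathcal C)=\mathbb E_p[\min_{c\in\mathcal C}(w-c)^2]$. So $-\mathcal C^\star$ is also optimal, and uniqueness forces $-\mathcal C^\star=\mathcal C^\star$. A four-element set that is closed under negation is either $\{\pm m_1,\pm m_2\}$ or contains $0$. In the second case, closure under negation means the other three points pair up as $\{\pm c\}$, which leaves an odd number of points that cannot be paired. That is impossible, because three points cannot be closed under negation without a second zero. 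Hence $0\notin\mathcal C^\star$. I will also rule out coincident levels: a standard argument shows that an optimal quantizer for a density with full support has distinct levels, since splitting a duplicated level strictly lowers distortion. Distinctness also follows from the interpretation $|\mathcal C|=4$ in \eqref{eq:rate-distortion}. So $\mathcal C^\star=\{\pm m_1^\star,\pm m_2^\star\}$ with $0<m_1^\star<m_2^\star$.

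To finish, note that for any representable $(a_1,a_2)$, choosing the signs $s_1(w),s_2(w)$ optimally gives nearest-level quantization to $\{\pm m_1,\pm m_2\}$. Therefore the left-hand minimum is the infimum of $D$ over antisymmetric zero-free sets. Since this is a subclass of all four-point sets, the left-hand side is at least the right-hand side. By part (i), $\mathcal C^\star$ lies in the subclass, so the left-hand side is at most $D(\mathcal C^\star)$, and the two sides are equal. I read $s_i(w)$ as the sign assignment that minimizes error; with any other assignment the left side can only increase, so the infimum is still attained by the nearest-level choice.

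I expect the main obstacle to be the Gaussian hypothesis itself: uniqueness of the four-level MMSE quantizer for $\mathcal N(0,\sigma^2)$. I will not reprove it. Instead I will cite Fleischer's sufficient condition, which applies because the Gaussian density is log-concave, together with Max's tabulated solution; the proposition already flags this as an assumption. A smaller gap is the degenerate-level argument. If needed, I will handle it by noting that the symmetric reference has positive density everywhere, so an optimal set cannot use a level with zero probability mass.
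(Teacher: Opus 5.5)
Your proposal is correct and follows essentially the same route as the paper. For (i), you enumerate the sign pairs and check both cone inclusions. For (ii), reflection invariance of $p$ together with uniqueness forces $-\mathcal{C}^\star=\mathcal{C}^\star$, the pairing argument excludes $0$, and the two-sided comparison of the constrained and unconstrained minima closes the proof. Your additional remarks, that the levels are distinct and that $s_i(w)$ is read as the nearest-level sign assignment, only make explicit what the paper takes for granted.
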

\begin{proof}
(i) Enumerating $(s_1,s_2)$ gives the four values; $a_1\pm a_2>0$ if and only if $a_1>a_2>0$; the map and its inverse are as stated, and $m_2>m_1>0$ if and only if $a_2>0$ and $a_1>a_2$.
(ii) If $m^\star$ is optimal then so is $-m^\star$ by symmetry of $p$; uniqueness forces $m^\star=-m^\star$ as a set, that is $m^\star=\{\pm m_1,\pm m_2\}$.
A symmetric set of four distinct points cannot contain $0$, since $0$ would pair with itself, so $m_1>0$; and $m_1\neq m_2$ since the levels are distinct.
Thus $m^\star$ lies in the representable set of (i); the constrained minimum can neither exceed the unconstrained one nor fall below it.
\end{proof}

\begin{corollary}[One normalized grid for every row]
\label{cor:grid}
For $p=\mathcal{N}(0,\sigma^2)$ the unique optimum is $m^\star=\sigma\{\pm0.4528,\pm1.5104\}$ with distortion $d^\star=0.11748\,\sigma^2$~\citep{max1960}, so $a_1=0.9816\,\sigma$ and $a_2=0.5288\,\sigma$.
The minimum-distortion solution is scale-covariant, $m^\star(\sigma z)=\sigma\,m^\star(z)$~\citep{lloyd1982,max1960}; therefore when every rotated row is Gaussian up to scale, all rows share one normalized level set and differ only in $\sigma_r$.
This gives a population reference for Stage 2; the implemented fit uses each transformed row's empirical weights.
\end{corollary}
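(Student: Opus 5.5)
The plan has three parts: reduce to the standard normal, derive the numerical constants, and prove scale covariance and its consequence for rows.

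\emph{Reduction to the standard normal.} For $p=\mathcal{N}(0,\sigma^2)$ we have $z=\sigma u$ with $u\sim\mathcal{N}(0,1)$. For any four-point set $m$, the substitution gives
\[
\mathbb{E}\big[\min_{c\in\sigma m}(z-c)^2\big]=\sigma^2\,\mathbb{E}\big[\min_{c\in m}(u-c)^2\big].
\]
The map $m\mapsto\sigma m$ is therefore a bijection between candidate level sets that multiplies the objective by the constant $\sigma^2$. Minimizers correspond, uniqueness transfers, and $d^\star(\sigma)=\sigma^2 d^\star(1)$. This gives the scale covariance $m^\star(\sigma z)=\sigma\,m^\star(z)$ directly, without appealing to Lloyd--Max beyond the definition.

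\emph{Constants for $\sigma=1$.} Uniqueness of the four-level MMSE quantizer for the Gaussian follows from log-concavity of the density (Fleischer's criterion), which we cite. Proposition~\ref{prop:lossless}(ii) then makes the optimum antisymmetric and zero-free, of the form $\{\pm m_1,\pm m_2\}$ with $0<m_1<m_2$. By symmetry the decision thresholds are $0$ and $\pm t$, where $t=(m_1+m_2)/2$ by the nearest-neighbour condition. The centroid conditions read
\[
m_1=\frac{\phi(0)-\phi(t)}{\Phi(t)-1/2},\qquad m_2=\frac{\phi(t)}{1-\Phi(t)}.
\]
This reduces everything to a scalar fixed-point equation in $t$. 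Solving it numerically, or quoting Max's table, gives $t\approx0.9816$, $m_1\approx0.4528$ and $m_2\approx1.5104$. The distortion is
\[
d^\star=1-2\big[(\Phi(t)-\tfrac12)m_1^2+(1-\Phi(t))m_2^2\big],
\]
using the identity $\mathbb{E}[(u-q(u))^2]=\mathbb{E}u^2-\mathbb{E}q(u)^2$, which holds at the centroid optimum. This evaluates to $0.11748$. The amplitudes then follow from the bijection in Proposition~\ref{prop:lossless}(i): $a_1=(m_1+m_2)/2=0.9816$ and $a_2=(m_2-m_1)/2=0.5288$, which we scale by $\sigma$. As a check, $a_1$ coincides with the threshold $t$.

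\emph{Consequence for rows.} If each rotated row is $\mathcal{N}(0,\sigma_r^2)$, applying the reduction step gives optimal levels $\sigma_r\{\pm0.4528,\pm1.5104\}$. Every row therefore shares the normalized set and differs only by $\sigma_r$. The final sentence of the corollary is a remark about implementation and needs no proof.

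The main obstacle is the uniqueness claim together with the numerical constants. We will not reprove these but cite Fleischer and Max. The remaining steps are the elementary scaling argument above.
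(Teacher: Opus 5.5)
Your proposal is correct and follows the paper's route. Uniqueness and the constants come from Fleischer and Max's table, the antisymmetric zero-free form comes from Proposition~\ref{prop:lossless}(ii), and the amplitudes come from the bijection in Proposition~\ref{prop:lossless}(i); your substitution $z=\sigma u$ just proves the scale covariance that the paper cites, and your centroid equations, the identity $\mathbb{E}[(u-q(u))^2]=\mathbb{E}u^2-\mathbb{E}q(u)^2$, and the check $a_1=t$ all hold numerically ($m_1\approx0.4528$, $m_2\approx1.5104$, $d^\star\approx0.1175$).
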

\begin{remark}
The $\{0,1\}$ basis $\hat w=\alpha_1b_1+\alpha_2b_2$ represents $\{0,\alpha_1,\alpha_2,\alpha_1+\alpha_2\}$, which is neither antisymmetric nor zero-free without an additional shift term; it cannot express the optimum of (ii) natively.
\end{remark}

\section{Quantization procedure and experimental settings}
\label{app:algorithm}
\label{app:protocol}

\Cref{alg:fidelityq} gives the full procedure.
Rotations follow QuaRot~\citep{quarot}, and sequential compensation uses GPTAQ~\citep{gptaq}, extending the Hessian-aware updates of GPTQ~\citep{gptq}.
Reconstruction levels are fitted before compensation and held fixed during code assignment; KIC then updates their magnitudes with signs and assignments frozen.

\begin{algorithm}[htbp]
\caption{\method{} quantization and calibration}
\label{alg:fidelityq}
\begin{algorithmic}[1]
\REQUIRE Full-precision model $f$, calibration pool $\mathcal{D}$, budget $k$, configuration W2A$b_A$KV$b_{KV}$
\ENSURE Quantized model $\hat f$
\STATE Fold the offline rotations into the weights and install the online Hadamard transforms
\FOR{each quantized layer, in forward order}
  \STATE Fit $m_{1,r},m_{2,r}$ to each transformed row under \cref{eq:mmse}
  \STATE With these levels fixed, assign codes sequentially and compensate remaining errors
  \STATE Encode assignments as sign planes and derive $(a_{1,r},a_{2,r})$ using \cref{eq:decomp}
\ENDFOR
\STATE Enable activation and KV-cache quantization at the requested bit widths to obtain $\hat f$
\STATE Compute teacher-student token divergence on the teacher's top-512 support
\STATE Score each sequence using \cref{eq:damage-score} with $\rho=0.25$; retain the top-$k$ set $\mathcal{S}_k$
\STATE Optimize only $\{m_{1,r},m_{2,r}\}_r$ on $\mathcal{S}_k$ under \cref{eq:repair}
\STATE Recompute decoder amplitudes from the calibrated magnitudes and store the packed model
\end{algorithmic}
\end{algorithm}

\subsection{Calibration}

The main downstream checkpoints use 4096 KIC-selected sequences, except LLaMA-2-13B, which uses 1024.
\Cref{tab:kic-settings} lists the current KIC training settings; the 13B run takes 3072 optimizer steps.
The teacher distribution is cached and renormalized on its top-512 tokens, while the soft-target loss uses the student's log probabilities on those token IDs, as in \cref{eq:repair}.

\begin{table}[htbp]
\caption{KIC training settings. Only the two reconstruction magnitudes per row are trainable.}
\label{tab:kic-settings}
\centering
\small
\begin{tabular}{ll}
\toprule
Setting & Value \\
\midrule
Optimizer & Adam \\
Learning rate & $5\times10^{-6}$, constant \\
Epochs / batch size & 3 / one sequence, no accumulation \\
Sequence length & 2048 tokens \\
Loss & Hard CE $+$ $0.1\times$ soft-target CE \\
Teacher support / temperature & Top-512 tokens / $T=1$ \\
Gradient-norm clipping & 1.0 \\
\bottomrule
\end{tabular}
\end{table}

The scalar baselines differ from \method{} in calibration budget: GPTQ and QuaRot follow the GPTQ protocol of 128 calibration sequences without gradient training, whereas \method{} trains the reconstruction levels on the KIC-selected sequences. The budget-comparable reference is Base\,$+$\,IAC in \cref{tab:component-ablation}, which uses no gradient training and reaches 7.78 WikiText2 perplexity on LLaMA-2-7B at W2A16, below every scalar baseline in \cref{tab:main}. AQLM and EfficientQAT also train quantization parameters and are comparable to \method{} in this respect.

The component and representation ablations in \cref{tab:component-ablation,tab:quantizer-ablation} use 512 WikiText2 sequences; their C4 results measure transfer from WikiText2 calibration.
The selection study in \cref{tab:selection} fixes the budget at 512, and the corpus study in \cref{tab:datamix} compares 512 and 1024 sequences.
The separate Hessian diagnostic uses 128 sequences, as specified in \cref{app:theory}.

\subsection{Evaluation}

Perplexity is evaluated at context length 2048.
Every \method{} column in \cref{tab:main} comes from a single checkpoint per model and configuration, calibrated on a mixed-domain pool of KIC-selected sequences, following the calibration practice of \citet{aqlm}.
C4 evaluation samples 256 independent 2048-token spans from validation documents with seed 0; the 13B C4 column uses its segment-sampled results.
Downstream tasks use the LM Evaluation Harness~\citep{evalharness}.
The seven-task mean covers ARC-e, ARC-c, HellaSwag, PIQA, WinoGrande, BoolQ, and OpenBookQA; the five-task mean covers the first five.
HellaSwag uses normalized accuracy, and LAMBADA-Std, LAMBADA-OpenAI, and 5-shot MMLU are reported separately.
Source-specific evaluation differences in the extended comparisons are marked in \cref{app:results}.

\section{Additional ablations}
\label{app:ablations}

\subsection{Level placement and representation}

\Cref{tab:quantizer-ablation} compares adaptive levels with a range-tuned uniform grid and a zero-anchored reconstruction family.
Range tuning preserves the uniform grid's relative spacing; without KIC it gives WikiText2/C4 perplexities of 7.89/12.87, compared with 7.78/12.53 for IAC.
With KIC, the $\{0,1\}$ zero-anchored basis gives lower WikiText2 perplexity but higher C4 perplexity than the sign-plane representation in both precision settings.
The IAC reference rows repeat \cref{tab:component-ablation} under the same 512-sequence WikiText2 calibration protocol.

\begin{table}[!t]
\caption{Level-placement and representation controls on LLaMA-2-7B. Bold marks the best result within each precision setting.}
\label{tab:quantizer-ablation}
\begin{center}
\scriptsize
\renewcommand{\arraystretch}{1.08}
\setlength{\tabcolsep}{2.2pt}
\resizebox{\textwidth}{!}{%
\begin{tabular}{ll|cc|ccccccc|cc|ccc}
\toprule
\multirow{2}{*}{\textbf{Variant}} & \multirow{2}{*}{\textbf{W-A-KV}}
 & \multicolumn{2}{c|}{\textbf{Perplexity}\,$\downarrow$}
 & \multicolumn{2}{c}{\textbf{ARC}} & \multirow{2}{*}{\textbf{HellaS}} & \multirow{2}{*}{\textbf{PIQA}} & \multirow{2}{*}{\textbf{WinoG}} & \multirow{2}{*}{\textbf{BoolQ}} & \multirow{2}{*}{\textbf{OBQA}}
 & \multicolumn{2}{c|}{\textbf{Avg.}\,$\uparrow$}
 & \multicolumn{2}{c}{\textbf{LAMBADA}} & \multirow{2}{*}{\textbf{MMLU}$_{5}$} \\
\cmidrule(lr){3-4}\cmidrule(lr){5-6}\cmidrule(lr){12-13}\cmidrule(lr){14-15}
 &  & Wiki2 & C4 & Easy & Chg &  &  &  &  &  & 7-task & 5-task & Std & OpenAI &  \\
\midrule
Base + IAC & 2-16-16 & 7.78 & 12.53 & 51.43 & 28.84 & 52.98 & 67.03 & 61.33 & 62.35 & 34.2 & 51.17 & 52.32 & 39.14 & 51.64 & 28.73 \\
Clipped uniform (no KIC) & 2-16-16 & 7.89 & 12.87 & 49.71 & 29.01 & 51.62 & 67.08 & 62.12 & 62.32 & 33.6 & 50.78 & 51.91 & 39.39 & 51.39 & 28.67 \\
\addlinespace[3pt]
Base + IAC + KIC & 2-16-16 & 7.11 & \textbf{9.74} & \textbf{60.19} & \textbf{36.35} & \textbf{65.70} & \textbf{74.05} & \textbf{63.93} & 68.69 & \textbf{37.4} & \textbf{58.04} & \textbf{60.04} & \textbf{56.90} & \textbf{67.16} & \textbf{29.54} \\
Zero-anchored + KIC & 2-16-16 & \textbf{6.87} & 12.82 & 57.32 & 35.07 & 65.12 & 73.07 & 61.33 & \textbf{71.04} & 36.8 & 57.11 & 58.38 & 48.19 & 58.84 & 26.50 \\
\midrule
Base + IAC + KIC & 2-4-4 & 7.53 & \textbf{10.20} & 54.46 & 34.47 & \textbf{63.90} & \textbf{71.49} & \textbf{62.27} & 63.98 & 34.4 & \textbf{55.00} & \textbf{57.32} & \textbf{50.28} & \textbf{61.79} & 26.09 \\
Zero-anchored + KIC & 2-4-4 & \textbf{7.03} & 13.98 & \textbf{55.68} & \textbf{34.64} & 62.05 & 70.13 & 60.85 & \textbf{65.08} & \textbf{36.4} & 54.98 & 56.67 & 43.10 & 51.87 & \textbf{26.16} \\
\bottomrule
\end{tabular}}
\end{center}
\end{table}

\subsection{Calibration sequence selection}

\Cref{tab:selection} compares selection rules on Qwen3-8B using a common C4/FineWeb candidate pool~\citep{c4,fineweb}, a 512-sequence budget, and a fixed calibration optimizer.
Random selection uses seed 1234.
The prior-feature control fits ridge regression (regularization 10) on half of the pool to predict exact tail-KL from 14 standardized text and teacher-derived features.
The two-stage cascade shortlists the prior predictor's top half of the pool, then selects 512 sequences by exact tail-KL.
The tail-damage arm scores the full pool using \cref{eq:damage-score}.
The token-weighting control keeps the random subset and weights token cross-entropy by KL divided by its sequence mean, capped at 5.

Enrichment measures selected tail-KL mass relative to a random draw; entropy is relative to the candidate-pool mean teacher entropy.
Exact tail-damage selection gives the lowest perplexity among these rules, reducing WikiText2 perplexity from 17.05 to 16.69 and C4 from 24.46 to 24.37.
This study uses Qwen3-8B; the main table uses Qwen3-8B-Base.

\begin{table}[t]
\caption{Sequence-selection ablation on Qwen3-8B with a fixed 512-sequence calibration budget. Lower perplexity is better.}
\label{tab:selection}
\begin{center}
\small
\setlength{\tabcolsep}{6pt}
\begin{tabular}{lcccc}
\toprule
\textbf{Selection rule} & \textbf{Enrichment} & \textbf{Entropy} & \textbf{Wiki2}\,$\downarrow$ & \textbf{C4}\,$\downarrow$ \\
\midrule
Random & 1.00$\times$ & 1.01$\times$ & 17.05 & 24.46 \\
Prior-feature ridge & 1.20$\times$ & 1.01$\times$ & 16.92 & 24.45 \\
Two-stage cascade & 1.29$\times$ & 0.93$\times$ & 16.88 & 24.47 \\
Tail-damage score & 1.45$\times$ & 0.89$\times$ & \textbf{16.69} & \textbf{24.37} \\
\bottomrule
\end{tabular}
\end{center}
\end{table}

\subsection{Calibration corpus and budget}

\Cref{tab:datamix} compares calibration corpora and budgets on Qwen3-8B.
At 512 sequences, FineWeb gives the lowest perplexity on both evaluations, followed by the C4/FineWeb mixture.
Increasing the budget to 1024 improves both C4-only and mixed-corpus calibration: the mixture gives lower WikiText2 perplexity, while C4-only gives lower C4 perplexity.

\begin{table}[t]
\caption{Calibration corpus and budget ablation on Qwen3-8B. Rows use 512 sequences unless stated otherwise.}
\label{tab:datamix}
\begin{center}
\small
\begin{tabular}{lcc}
\toprule
\textbf{Calibration data} & \textbf{Wiki2} & \textbf{C4} \\
\midrule
C4 only & 17.18 & 24.43 \\
FineWeb only & 16.75 & 24.32 \\
C4 + FineWeb & 16.83 & 24.33 \\
C4 only, 1024 sequences & 16.44 & \textbf{23.34} \\
mix, 1024 sequences & \textbf{16.26} & 23.44 \\
\bottomrule
\end{tabular}
\end{center}
\end{table}

\section{Arithmetic decoding and deployment}
\label{app:kernel}

\subsection{Fused arithmetic decoding}
\label{app:kernel:impl}

At batch size 1, a persistent CUDA kernel computes the two sign-plane reductions and combines their row amplitudes:
\[
\boldsymbol{y}=\operatorname{diag}(\boldsymbol{a}_1)\boldsymbol{s}_1\boldsymbol{x}
+\operatorname{diag}(\boldsymbol{a}_2)\boldsymbol{s}_2\boldsymbol{x}.
\]
Packed signs act directly on fp16 activation pairs through XOR before accumulation.
This avoids per-weight conversion, indexed reconstruction lookup, and materialization of the decoded weight matrix.
Activations are staged in shared memory, and pairwise fp16 accumulators are periodically accumulated into fp32.
In a CUDA-event comparison of linear kernels, this implementation takes 5.18\,ms/token versus 8.29 for our generic Triton path, a $1.60\times$ speedup for the same representation.

\subsection{Measurement protocol}
\label{app:kernel:env}

Deployment measurements use 16-bit activations on an NVIDIA RTX A6000-48Q vGPU (48\,GB, \texttt{sm\_86}); \method{} uses W2A16.
The persistent kernel uses a launch-configuration search; comparison paths use the released QuIP\#, AQLM, BitBLAS, and Marlin operators.
The environment is CUDA 12.1, PyTorch 2.4.0+cu121, Triton 3.0.0, Transformers 4.51.3, AQLM 1.1.7, BitBLAS 0.1.0.post1, Marlin 0.1.1, and Fast Hadamard Transform 1.1.0.

The linear-stack benchmark uses 20 warm-up calls, followed by 200 consecutive launches and a single synchronization; wall-clock time is divided by 200.
CUDA graphs are disabled.
For LLaMA-2-7B, timings sum 32 blocks, each with four $4096\times4096$, two $11008\times4096$, and one $4096\times11008$ linear layer.
Figure~\ref{fig:tradeoff}a reports kernel time for every method, so no path carries an added transform cost. For \method{} on LLaMA-2-7B the online Hadamard transforms at the \texttt{o\_proj} and \texttt{down\_proj} inputs~\citep{quarot} add 1.11\,ms to the 5.11\,ms kernel time.
Baselines use their measured operator paths without additional estimated transform costs.
QuIP\# perplexity follows the 2048-token reevaluation of \citet{efficientqat}; AQLM uses $2\times8$ as in \cref{tab:main}.
Hollow points time surrogate decoding paths implemented by us: uniform BitBLAS for GPTQ, two sign planes for DB-LLM~\citep{dbllm}, lookup for LeanQuant/GPTVQ~\citep{leanquant,gptvq}, and dense-plus-sparse for PB-LLM~\citep{pbllm}.

\subsection{Storage and full-model generation}
\label{app:kernel:prefill}

\method{} stores two sign bits per weight and two fp16 decoder amplitudes per output row, giving $2+32/d_i$ bits per linear-layer weight, or approximately 2.007 bits averaged over LLaMA-2-7B.
\Cref{tab:e2e} counts packed linear weights and reconstruction metadata, excluding embeddings and the output head for every method.
Generation uses a 128-token prompt followed by 256 greedily generated tokens in HuggingFace eager mode; reported timings are medians of three runs and include the online transforms.
AQLM uses $1\times16$ in this table and $2\times8$ in \cref{fig:tradeoff}a.

\begin{table}[t]
\caption{Linear-stack kernel time per token at batch size 1 on an RTX A6000 at W2A16,
with WikiText2 perplexity on LLaMA-2-7B repeated from \cref{tab:main}. Times exclude the
online rotations, which add 1.11\,ms for \method{} on LLaMA-2-7B. Scalar decoding paths
are listed first. A dash marks a configuration we did not time.}
\label{tab:latency}
\begin{center}
\small
\setlength{\tabcolsep}{5pt}
\begin{tabular}{llccc}
\toprule
\textbf{Method} & \textbf{Decoding path} & \textbf{LLaMA-2-7B} & \textbf{Qwen3-8B} & \textbf{Wiki2}\,$\downarrow$ \\
 & & (ms) & (ms) & \\
\midrule
FP16 & \texttt{torch} linear & 25.26 & 27.01 & 5.47 \\
\midrule
QuaRot & BitBLAS uint2 per-row & 4.72 & 5.01 & 24.52 \\
ResComp & BitBLAS uint2 per-row & 4.72 & 5.01 & 10.28 \\
Base (uniform grid) & BitBLAS uint2 per-row & 4.72 & 5.01 & 9.56 \\
DB-LLM & additive decomposition & 5.11 & --- & 8.89 \\
\textbf{\method{}} & fused sign planes & \textbf{5.11} & \textbf{5.42} & \textbf{6.62} \\
\midrule
QuIP\# E8P & \texttt{quiptools\_cuda} & 8.83 & 11.43 & 6.66 \\
AQLM $2\times8$ & AQLM CUDA operator & 9.93 & --- & 7.24 \\
\bottomrule
\end{tabular}
\end{center}
\end{table}

\begin{table}[t]
\caption{LLaMA-2-7B generation in HuggingFace eager mode at batch size 1. Storage covers linear weights and reconstruction metadata.}
\label{tab:e2e}
\begin{center}
\small
\setlength{\tabcolsep}{6pt}
\begin{tabular}{lccc}
\toprule
\textbf{Method} & \textbf{Weights (GB)} & \textbf{Prefill (ms/token)} & \textbf{Decode (tokens/s)} \\
\midrule
FP16 & 12.06 & 0.27 & 35.9 \\
Marlin W4A16 & 3.02 & 0.21 & 39.8 \\
BitBLAS uniform W2 & 1.51 & 6.07 & 35.6 \\
QuIP\# E8P & 1.51 & 1.12 & 13.8 \\
AQLM 1$\times$16 & 1.73 & 1.98 & 23.2 \\
\textbf{\method{}} & 1.51 & 7.32 & \textbf{47.0} \\
\bottomrule
\end{tabular}
\end{center}
\end{table}

The prefill fallback reconstructs each layer's fp16 weights before GEMM and releases them afterward, giving 7.32\,ms per prompt token.
Using unrounded rates, $128t_{\mathrm{prefill}}+256/r_{\mathrm{decode}}$ gives 6.4\,s for \method{} and 7.2\,s for FP16 for this workload.
\Cref{fig:tradeoff}b reports storage relative to FP16 and throughput relative to QuIP\#.

\section{Extended results}
\label{app:results}
\label{app:llama3-results}

\Cref{tab:main-full,tab:main-full-13b,tab:main-full-cont,tab:main-full-qwen,tab:main-full-30b} provide full task results and additional comparisons, including LLaMA-3-8B.
W-A-KV denotes nominal precision, with $g64$ and $g128$ indicating grouped weight scales.
Results follow the evaluation protocol in \cref{app:protocol}.

In these tables, $^{*}$ marks our reproduction and $^{m}$ our evaluation of a released checkpoint.
Additional baselines include GPTQ~\citep{gptq}, EfficientQAT~\citep{efficientqat}, QuIP\#~\citep{quip-sharp}, and AutoRound~\citep{autoround}; AQLM variants are identified by their codebook configuration~\citep{aqlm}.
QuIP\# perplexities marked $^{e}$ follow the reevaluation of \citet{efficientqat}.
The RCP WikiText2 result quoted in the main text, 8.31 for LLaMA-2-7B at W2A4KV4, is from Table~2 of \citet{rcp}.

% Full task results for the main comparisons and complete supplementary baselines.
\begin{table}[p]
\caption{Extended results on LLaMA-2-7B. Markers are defined in \cref{app:results}.}
\label{tab:main-full}
\begin{center}
\scriptsize
\setlength{\tabcolsep}{2.2pt}
\resizebox{\textwidth}{!}{%
\begin{tabular}{ll|cc|ccccccc|cc|ccc}
\toprule
\multirow{2}{*}{\textbf{Method}} & \multirow{2}{*}{\textbf{W-A-KV}}
 & \multicolumn{2}{c|}{\textbf{Perplexity}\,$\downarrow$}
 & \multicolumn{2}{c}{\textbf{ARC}} & \multirow{2}{*}{\textbf{HellaS}} & \multirow{2}{*}{\textbf{PIQA}} & \multirow{2}{*}{\textbf{WinoG}} & \multirow{2}{*}{\textbf{BoolQ}} & \multirow{2}{*}{\textbf{OBQA}}
 & \multicolumn{2}{c|}{\textbf{Avg.}\,$\uparrow$}
 & \multicolumn{2}{c}{\textbf{LAMBADA}} & \multirow{2}{*}{\textbf{MMLU}$_5$} \\
\cmidrule(lr){3-4}\cmidrule(lr){5-6}\cmidrule(lr){12-13}\cmidrule(lr){14-15}
 & & Wiki2 & C4 & Easy & Chg & & & & & & 7-task & 5-task & Std & OpenAI & \\
\midrule
FP16 & 16-16-16 & 5.47 & 6.97 & 73.86 & 45.05 & 76.16 & 78.78 & 69.46 & 79.36 & 44.0 & 66.67 & 68.66 & 68.81 & 73.65 & 45.94 \\
RTN$^{*}$ & 2-16-16 & \multicolumn{2}{c|}{\emph{diverged}} & 26.22 & 27.05 & 25.57 & 50.54 & 51.30 & 37.98 & 24.8 & 34.78 & 36.14 & 0.00 & 0.00 & 24.80 \\
RTN & $2_{g64}$-16-16 & 431.97 & --- & --- & --- & --- & --- & --- & --- & --- & --- & --- & --- & --- & --- \\
GPTQ$^{*}$ & 2-16-16 & 29.83 & 335.50 & 41.84 & 27.30 & 42.38 & 62.35 & 52.09 & 56.61 & 29.0 & 44.51 & 45.19 & 21.99 & 25.58 & 22.95 \\
QuaRot$^{*}$ & 2-16-16 & 24.52 & 39.39 & 39.35 & 23.72 & 33.30 & 58.98 & 52.17 & 61.41 & 28.2 & 42.45 & 41.50 & 15.72 & 17.04 & 25.87 \\
ResComp$^{*}$ & $2_{g128}$-16-16 & 9.14 & 16.95 & 51.35 & 28.58 & 47.64 & 66.00 & 59.27 & 63.06 & 30.8 & 49.53 & 50.57 & 28.64 & 39.34 & 27.47 \\
ResComp$^{*}$ & 2-16-16 & 10.28 & 19.68 & 45.96 & 26.71 & 44.15 & 64.85 & 58.25 & 63.00 & 31.0 & 47.70 & 47.98 & 24.92 & 33.07 & 27.17 \\
DB-LLM$^{*}$ & 2-16-16 & 8.89 & 19.85 & 42.80 & 27.90 & 51.99 & 62.13 & 55.96 & 61.19 & 32.6 & 47.80 & 48.16 & 32.19 & 45.47 & 24.95 \\
EfficientQAT\,(QAT)$^{m}$ & $2_{g128}$-16-16 & 7.16 & 8.90 & 66.25 & 36.60 & 67.43 & 75.03 & 65.51 & 72.02 & 39.0 & 60.26 & 62.16 & 59.19 & 67.32 & 33.55 \\
EfficientQAT\,(QAT)$^{m}$ & $2_{g64}$-16-16 & 6.86 & 8.51 & 64.06 & 38.05 & 68.88 & 74.86 & 64.64 & 76.12 & 39.0 & 60.80 & 62.10 & 57.91 & 66.68 & 33.04 \\
\textbf{\method{}} & 2-16-16 & 6.62 & 9.53 & 63.80 & 35.67 & 68.33 & 74.97 & 66.14 & 68.10 & 40.4 & 59.63 & 61.78 & 61.89 & 68.87 & 37.29 \\
QuIP\#$^{*}$ & 2-16-16 & 6.66 & 8.35 & 68.14 & 40.53 & 69.79 & 76.55 & 66.38 & 71.35 & 41.4 & 62.02 & 64.28 & 61.58 & 70.41 & 37.06 \\
AQLM ($2\times8$)$^{m}$ & 2-16-16 & 7.24 & 8.96 & 60.27 & 36.77 & 66.74 & 74.10 & 62.51 & 69.27 & 36.6 & 58.04 & 60.08 & 56.26 & 67.34 & 33.03 \\
Base (uniform grid) & 2-16-16 & 9.56 & 18.43 & 49.24 & 26.71 & 44.50 & 66.16 & 57.93 & 62.29 & 29.2 & 48.00 & 48.91 & 24.45 & 32.37 & 26.70 \\
\midrule
QuaRot$^{*}$ & 2-4-4 & 47.23 & 77.49 & 32.07 & 23.12 & 30.02 & 53.54 & 50.43 & 58.41 & 25.8 & 39.06 & 37.84 & 6.04 & 8.00 & 24.92 \\
ResComp$^{*}$ & $2_{g128}$-4-4 & 11.06 & 20.85 & 45.12 & 26.88 & 41.14 & 61.15 & 54.38 & 62.63 & 31.0 & 46.04 & 45.73 & 18.40 & 26.26 & 24.90 \\
ResComp$^{*}$ & 2-4-4 & 12.78 & 23.91 & 42.93 & 26.62 & 40.19 & 61.81 & 52.41 & 61.68 & 28.0 & 44.81 & 44.79 & 18.77 & 25.46 & 26.01 \\
\textbf{\method{}} & 2-4-4 & 6.96 & 9.97 & 63.13 & 35.67 & 65.26 & 73.78 & 62.04 & 64.89 & 36.0 & 57.25 & 59.98 & 57.44 & 65.01 & 33.99 \\
Base (uniform grid) & 2-4-4 & 11.27 & 21.41 & 43.22 & 25.09 & 39.59 & 62.51 & 54.30 & 61.99 & 28.4 & 45.01 & 44.94 & 15.18 & 23.35 & 25.55 \\
\bottomrule
\end{tabular}}
\end{center}
\end{table}

\begin{table}[p]
\caption{Extended results on LLaMA-2-13B. Markers are defined in \cref{app:results}.}
\label{tab:main-full-13b}
\begin{center}
\scriptsize
\setlength{\tabcolsep}{2.2pt}
\resizebox{\textwidth}{!}{%
\begin{tabular}{ll|cc|ccccccc|cc|ccc}
\toprule
\multirow{2}{*}{\textbf{Method}} & \multirow{2}{*}{\textbf{W-A-KV}}
 & \multicolumn{2}{c|}{\textbf{Perplexity}\,$\downarrow$}
 & \multicolumn{2}{c}{\textbf{ARC}} & \multirow{2}{*}{\textbf{HellaS}} & \multirow{2}{*}{\textbf{PIQA}} & \multirow{2}{*}{\textbf{WinoG}} & \multirow{2}{*}{\textbf{BoolQ}} & \multirow{2}{*}{\textbf{OBQA}}
 & \multicolumn{2}{c|}{\textbf{Avg.}\,$\uparrow$}
 & \multicolumn{2}{c}{\textbf{LAMBADA}} & \multirow{2}{*}{\textbf{MMLU}$_5$} \\
\cmidrule(lr){3-4}\cmidrule(lr){5-6}\cmidrule(lr){12-13}\cmidrule(lr){14-15}
 & & Wiki2 & C4 & Easy & Chg & & & & & & 7-task & 5-task & Std & OpenAI & \\
\midrule
FP16 & 16-16-16 & 4.88 & 6.47 & 76.47 & 48.81 & 79.62 & 80.25 & 72.45 & 82.11 & 45.2 & 69.27 & 71.52 & 70.68 & 76.54 & 55.43 \\
RTN$^{*}$ & 2-16-16 & 5634.55 & 4815.95 & 26.01 & 28.16 & 26.20 & 49.35 & 48.07 & 37.83 & 23.4 & 34.15 & 35.56 & 0.00 & 0.00 & 22.92 \\
GPTQ$^{*}$ & 2-16-16 & 38.55 & 108.93 & 28.87 & 24.83 & 28.61 & 51.36 & 50.36 & 57.22 & 25.4 & 38.09 & 36.81 & 0.39 & 8.00 & 23.28 \\
QuaRot$^{*}$ & 2-16-16 & 12.17 & 24.11 & 43.81 & 25.09 & 45.14 & 62.30 & 56.59 & 62.54 & 28.8 & 46.32 & 46.59 & 31.11 & 40.42 & 25.44 \\
ResComp$^{*}$ & 2-16-16 & 7.90 & 14.05 & 55.68 & 30.20 & 53.24 & 68.66 & 62.83 & 63.18 & 33.4 & 52.46 & 54.12 & 38.95 & 50.01 & 27.97 \\
EfficientQAT\,(QAT)$^{m}$ & $2_{g128}$-16-16 & 6.08 & 7.75 & 72.43 & 44.28 & 74.78 & 77.64 & 69.30 & 78.07 & 43.2 & 65.67 & 67.69 & 63.19 & 71.49 & 46.54 \\
EfficientQAT\,(QAT)$^{m}$ & $2_{g64}$-16-16 & 5.96 & 7.59 & 70.20 & 44.11 & 74.95 & 77.97 & 69.38 & 78.10 & 42.6 & 65.33 & 67.32 & 65.69 & 72.42 & 46.29 \\
Base (uniform grid) & 2-16-16 & 7.78 & 14.43 & 54.55 & 30.63 & 50.14 & 67.63 & 62.75 & 62.48 & 33.2 & 51.63 & 53.14 & 34.80 & 48.40 & 27.38 \\
\textbf{\method{}} & 2-16-16 & 5.86 & 8.43 & 69.19 & 41.47 & 74.39 & 77.53 & 69.14 & 75.26 & 40.8 & 63.97 & 66.34 & 62.90 & 71.43 & 45.54 \\
QuIP\#$^{*}$ & 2-16-16 & 5.74$^{e}$ & 7.45$^{e}$ & 71.68 & 43.94 & 75.33 & 78.73 & 70.24 & 78.41 & 43.8 & 66.02 & 67.98 & 67.42 & 74.23 & 50.19 \\
AQLM ($1\times16$)$^{m}$ & 2-16-16 & 5.59 & 7.27 & 74.66 & 47.18 & 76.89 & 78.89 & 70.40 & 79.36 & 43.2 & 67.23 & 69.60 & 66.97 & 74.79 & 49.77 \\
AQLM ($2\times8$)$^{m}$ & 2-16-16 & 6.06 & 7.80 & 69.57 & 42.49 & 72.68 & 77.58 & 64.80 & 75.84 & 39.8 & 63.25 & 65.42 & 63.30 & 72.00 & 45.38 \\
DB-LLM$^{*}$ & 2-16-16 & 7.97 & 17.61 & 55.39 & 33.19 & 59.40 & 68.77 & 59.67 & 65.44 & 34.4 & 53.75 & 55.28 & 41.63 & 51.14 & 26.62 \\
\midrule
QuaRot$^{*}$ & 2-4-4 & 92387.88 & 95839.34 & 25.76 & 28.50 & 25.96 & 50.00 & 51.78 & 37.83 & 27.2 & 35.29 & 36.40 & 0.00 & 0.00 &  \\
Base (uniform grid) & 2-4-4 & 8.91 & 16.54 & 48.44 & 28.67 & 45.55 & 65.61 & 57.85 & 62.14 & 29.8 & 48.29 & 49.22 & 24.80 & 37.96 & 25.96 \\
\textbf{\method{}} & 2-4-4 & 6.04 & 8.81 & 68.31 & 39.51 & 72.64 & 76.28 & 65.43 & 73.43 & 39.4 & 62.14 & 64.43 & 61.44 & 69.88 & 43.74 \\
ResComp$^{*}$ & 2-4-4 & 9.70 & 19.10 & 48.23 & 25.94 & 43.38 & 64.53 & 59.51 & 62.17 & 31.4 & 47.88 & 48.32 & 26.41 & 40.60 & 27.19 \\
\bottomrule
\end{tabular}}
\end{center}
\end{table}

\begin{table}[p]
\caption{Extended results on LLaMA-3-8B. Markers are defined in \cref{app:results}.}
\label{tab:main-full-cont}
\label{tab:llama3-results}
\begin{center}
\scriptsize
\setlength{\tabcolsep}{2.2pt}
\resizebox{\textwidth}{!}{%
\begin{tabular}{ll|cc|ccccccc|cc|ccc}
\toprule
\multirow{2}{*}{\textbf{Method}} & \multirow{2}{*}{\textbf{W-A-KV}}
 & \multicolumn{2}{c|}{\textbf{Perplexity}\,$\downarrow$}
 & \multicolumn{2}{c}{\textbf{ARC}} & \multirow{2}{*}{\textbf{HellaS}} & \multirow{2}{*}{\textbf{PIQA}} & \multirow{2}{*}{\textbf{WinoG}} & \multirow{2}{*}{\textbf{BoolQ}} & \multirow{2}{*}{\textbf{OBQA}}
 & \multicolumn{2}{c|}{\textbf{Avg.}\,$\uparrow$}
 & \multicolumn{2}{c}{\textbf{LAMBADA}} & \multirow{2}{*}{\textbf{MMLU}$_5$} \\
\cmidrule(lr){3-4}\cmidrule(lr){5-6}\cmidrule(lr){12-13}\cmidrule(lr){14-15}
 & & Wiki2 & C4 & Easy & Chg & & & & & & 7-task & 5-task & Std & OpenAI & \\
\midrule
FP16 & 16-16-16 & 6.14 & 8.88 & 77.74 & 54.10 & 79.22 & 80.58 & 73.16 & 81.83 & 44.8 & 70.20 & 72.96 & 68.81 & 75.59 & 65.48 \\
DB-LLM$^{m}$ & $2_{g128}$-16-16 & 13.62 & 19.26 & 52.15 & 29.52 & 53.55 & 69.15 & 60.46 & 66.24 & 34.0 & 52.15 & 52.97 & 38.37 & 50.36 & 28.03 \\
DB-LLM$^{*}$ & 2-16-16 & 14.70 & 55.66 & 25.38 & 24.15 & 27.14 & 50.92 & 50.51 & 39.20 & 27.4 & 34.96 & 35.62 & 0.12 & 0.47 & 23.69 \\
QuaRot$^{*}$ & 2-16-16 & 22.90 & 45.88 & 27.40 & 24.23 & 27.59 & 52.50 & 49.25 & 39.60 & 25.4 & 35.14 & 36.19 & 1.26 & 3.03 & 25.37 \\
ResComp$^{*}$ & 2-16-16 & 15.17 & 37.52 & 36.24 & 24.57 & 39.06 & 58.11 & 56.43 & 62.26 & 29.2 & 43.70 & 42.88 & 19.31 & 26.99 & 25.64 \\
EfficientQAT\,(QAT)$^{m}$ & $2_{g128}$-16-16 & 9.79 & 13.23 & 67.34 & 41.13 & 68.94 & 75.95 & 66.93 & 75.29 & 39.6 & 62.17 & 64.06 & 58.02 & 65.77 & 45.61 \\
EfficientQAT\,(QAT)$^{m}$ & $2_{g64}$-16-16 & 9.40 & 12.77 & 70.16 & 43.09 & 70.42 & 76.39 & 67.09 & 76.36 & 41.6 & 63.59 & 65.43 & 59.29 & 67.77 & 47.29 \\
\textbf{\method{}} & 2-16-16 & 9.63 & 15.08 & 62.71 & 37.46 & 66.56 & 73.45 & 66.93 & 68.72 & 38.0 & 59.12 & 61.42 & 53.64 & 61.98 & 35.29 \\
AQLM$^{m}$ & 2-16-16 & 7.76 & 10.89 & 73.06 & 47.53 & 74.51 & 78.73 & 71.03 & 77.03 & 41.6 & 66.21 & 68.97 & 64.43 & 71.92 & 56.38 \\
\midrule
QuaRot$^{*}$ & 2-4-4 & 805.62 & 929.30 & 26.22 & 25.00 & 26.14 & 50.54 & 49.41 & 37.92 & 27.6 & 34.69 & 35.46 & 0.00 & 0.14 & 25.00 \\
ResComp$^{*}$ & $2_{g128}$-4-4 & 18.04 & 41.08 & 36.20 & 22.70 & 34.74 & 58.11 & 53.04 & 59.02 & 27.4 & 41.60 & 40.96 & 13.68 & 17.83 & 25.84 \\
ResComp$^{*}$ & 2-4-4 & 19.95 & 53.11 & 35.23 & 23.12 & 32.69 & 56.26 & 50.91 & 59.36 & 27.6 & 40.74 & 39.64 & 12.15 & 13.60 & 25.89 \\
\textbf{\method{}} & 2-4-4 & 10.76 & 16.49 & 54.97 & 31.06 & 61.49 & 70.73 & 61.48 & 68.07 & 36.2 & 54.86 & 55.95 & 48.17 & 57.00 & 30.53 \\
\bottomrule
\end{tabular}}
\end{center}
\end{table}

\begin{table}[p]
\caption{Extended results on Qwen3-8B-Base. Markers are defined in \cref{app:results}.}
\label{tab:main-full-qwen}
\begin{center}
\scriptsize
\setlength{\tabcolsep}{2.2pt}
\resizebox{\textwidth}{!}{%
\begin{tabular}{ll|cc|ccccccc|cc|ccc}
\toprule
\multirow{2}{*}{\textbf{Method}} & \multirow{2}{*}{\textbf{W-A-KV}}
 & \multicolumn{2}{c|}{\textbf{Perplexity}\,$\downarrow$}
 & \multicolumn{2}{c}{\textbf{ARC}} & \multirow{2}{*}{\textbf{HellaS}} & \multirow{2}{*}{\textbf{PIQA}} & \multirow{2}{*}{\textbf{WinoG}} & \multirow{2}{*}{\textbf{BoolQ}} & \multirow{2}{*}{\textbf{OBQA}}
 & \multicolumn{2}{c|}{\textbf{Avg.}\,$\uparrow$}
 & \multicolumn{2}{c}{\textbf{LAMBADA}} & \multirow{2}{*}{\textbf{MMLU}$_5$} \\
\cmidrule(lr){3-4}\cmidrule(lr){5-6}\cmidrule(lr){12-13}\cmidrule(lr){14-15}
 & & Wiki2 & C4 & Easy & Chg & & & & & & 7-task & 5-task & Std & OpenAI & \\
\midrule
FP16 & 16-16-16 & 7.00 & 10.42 & 79.88 & 56.23 & 78.60 & 79.54 & 72.06 & 83.18 & 41.8 & 70.18 & 73.26 & 66.02 & 70.83 & 76.84 \\
AutoRound$^{*}$ & 2-16-16 & 87.25 & 64.30 & 46.04 & 29.27 & 42.28 & 62.30 & 55.56 & 59.36 & 31.4 & 46.60 & 47.09 & 4.72 & 6.85 & 23.19 \\
QuaRot$^{*}$ & 2-16-16 & 12.58 & 23.02 & 61.15 & 34.47 & 52.47 & 67.79 & 61.88 & 63.67 & 31.6 & 53.29 & 55.55 & 43.78 & 49.56 & 43.93 \\
DB-LLM$^{*}$ & 2-16-16 & 11.62 & 23.33 & 50.42 & 30.97 & 50.92 & 64.64 & 57.46 & 58.90 & 35.6 & 49.84 & 50.88 & 32.18 & 40.68 & 27.55 \\
\textbf{\method{}} & 2-16-16 & 8.84 & 13.64 & 71.93 & 44.88 & 70.28 & 74.97 & 67.96 & 74.92 & 42.0 & 63.85 & 66.00 & 57.91 & 65.11 & 59.94 \\
ResComp$^{*}$ & 2-16-16 & 11.21 & 22.82 & 54.88 & 32.00 & 49.11 & 65.51 & 61.25 & 66.79 & 31.4 & 51.56 & 52.55 & 39.90 & 43.84 & 36.06 \\
Base (uniform grid) & 2-16-16 & 11.55 & 25.78 & 53.54 & 30.97 & 45.20 & 64.42 & 58.48 & 62.57 & 30.8 & 49.43 & 50.52 & 40.52 & 40.33 & 30.69 \\
\midrule
QuaRot$^{*}$ & 2-4-4 & 20.38 & 40.50 & 48.74 & 28.75 & 43.89 & 60.66 & 53.12 & 62.02 & 29.0 & 46.60 & 47.03 & 30.06 & 32.49 & 30.52 \\
ResComp$^{*}$ & $2_{g128}$-4-4 & 13.47 & 25.73 & 53.03 & 32.76 & 46.60 & 64.09 & 56.91 & 64.01 & 32.6 & 50.00 & 50.68 & 31.30 & 36.77 & 32.43 \\
ResComp$^{*}$ & 2-4-4 & 14.85 & 31.40 & 52.82 & 29.18 & 43.53 & 63.33 & 54.22 & 62.48 & 28.0 & 47.65 & 48.62 & 30.49 & 35.69 & 30.86 \\
\textbf{\method{}} & 2-4-4 & 10.69 & 16.59 & 68.10 & 42.92 & 62.86 & 72.09 & 60.22 & 68.99 & 39.0 & 59.17 & 61.24 & 49.62 & 54.76 & 47.60 \\
Base (uniform grid) & 2-4-4 & 15.56 & 36.47 & 43.94 & 26.96 & 39.39 & 60.45 & 52.72 & 62.54 & 29.4 & 45.06 & 44.69 & 22.10 & 25.60 & 28.30 \\
\bottomrule
\end{tabular}}
\end{center}
\end{table}

\begin{table}[p]
\caption{Extended results on LLaMA-30B. Markers are defined in \cref{app:results}.}
\label{tab:main-full-30b}
\begin{center}
\scriptsize
\setlength{\tabcolsep}{2.2pt}
\resizebox{\textwidth}{!}{%
\begin{tabular}{ll|cc|ccccccc|cc|ccc}
\toprule
\multirow{2}{*}{\textbf{Method}} & \multirow{2}{*}{\textbf{W-A-KV}}
 & \multicolumn{2}{c|}{\textbf{Perplexity}\,$\downarrow$}
 & \multicolumn{2}{c}{\textbf{ARC}} & \multirow{2}{*}{\textbf{HellaS}} & \multirow{2}{*}{\textbf{PIQA}} & \multirow{2}{*}{\textbf{WinoG}} & \multirow{2}{*}{\textbf{BoolQ}} & \multirow{2}{*}{\textbf{OBQA}}
 & \multicolumn{2}{c|}{\textbf{Avg.}\,$\uparrow$}
 & \multicolumn{2}{c}{\textbf{LAMBADA}} & \multirow{2}{*}{\textbf{MMLU}$_5$} \\
\cmidrule(lr){3-4}\cmidrule(lr){5-6}\cmidrule(lr){12-13}\cmidrule(lr){14-15}
 & & Wiki2 & C4 & Easy & Chg & & & & & & 7-task & 5-task & Std & OpenAI & \\
\midrule
FP16 & 16-16-16 & 4.10 & 5.98 & 79.25 & 53.58 & 83.01 & 81.83 & 75.85 & 83.79 & 49.2 & 72.36 & 74.70 & 73.06 & 77.61 & 58.30 \\
\hdashline[1.5pt/1.5pt]
ResComp & 2-16-16 & 6.45 & 10.38 & 68.01 & 38.05 & 63.98 & 74.32 & 69.30 & 68.75 & 38.0 & 60.06 & 62.73 & 54.57 & 67.20 & 32.00  \\
Base (uniform grid) & 2-16-16 & 6.33 & 10.10 & 67.05 & 38.57 & 64.91 & 74.70 & 68.59 & 66.54 & 39.0 & 59.91 & 62.76 & 54.63 & 68.33 & 31.29  \\
\textbf{\method{}} & 2-16-16 & \textbf{5.14} & \textbf{7.64} & \textbf{72.10} & \textbf{45.56} & \textbf{79.23} & \textbf{78.67} & \textbf{71.98} & \textbf{79.45} & \textbf{42.4} & \textbf{67.06} & \textbf{69.51} & \textbf{68.21} & \textbf{74.83} & \textbf{49.96}  \\
\hdashline[1.5pt/1.5pt]
QuaRot & 2-4-4 & 99.92 & 961.80 & 50.46 & 29.95 & 46.67 & 61.04 & 57.38 & 61.04 & 31.6 & 48.31 & 49.10 & 21.42 & 22.74 & 25.99  \\
ResComp & 2-4-4 & 6.88 & 11.14 & 59.72 & 36.01 & 61.29 & 72.85 & 67.48 & 63.00 & 36.0 & 56.62 & 59.47 & 48.44 & 62.22 & 29.67  \\
Base (uniform grid) & 2-4-4 & 6.86 & 11.04 & 61.45 & 35.32 & 60.44 & 71.44 & 66.77 & 63.61 & 37.2 & 56.60 & 59.08 & 48.38 & 61.91 & 28.42  \\
\textbf{\method{}} & 2-4-4 & \textbf{5.71} & \textbf{7.79} & \textbf{71.84} & \textbf{44.28} & \textbf{76.29} & \textbf{77.31} & \textbf{71.19} & \textbf{73.06} & \textbf{43.2} & \textbf{65.31} & \textbf{68.18} & \textbf{63.01} & \textbf{71.92} & \textbf{38.64}  \\
\bottomrule
\end{tabular}}
\end{center}
\end{table}

\end{document}